\documentclass[journal]{IEEEtran}
\UseRawInputEncoding
\IEEEoverridecommandlockouts                              

\usepackage{amsmath,amsfonts, amssymb}
\usepackage[linesnumbered,ruled,vlined]{algorithm2e}
\usepackage{array}
\usepackage{subfigure}
\usepackage{textcomp}
\usepackage{stfloats}
\usepackage{url}
\usepackage{verbatim}
\usepackage{graphicx}
\usepackage{cite}
\usepackage{mathrsfs}
\usepackage{graphicx}
\usepackage{epstopdf}
\usepackage{tabularx} 
\usepackage{multirow}
\usepackage[ruled, vlined, linesnumbered]{algorithm2e}
\usepackage{color}
\usepackage{subcaption} 
\usepackage{booktabs}
\usepackage{hyperref}
\usepackage{makecell}
\newtheorem{theorem}{Theorem}

\newtheorem{proof}{Proof}[section]
\newtheorem{definition}{Definition}

\title{\LARGE \bf
	Multi-Agent Pathfinding with Non-Unit Integer Edge Costs via Enhanced Conflict-Based Search and Graph Discretization
}
\author{Hongkai Fan, Qinjing Xie, Bo Ouyang, Yaonan Wang, Zhi Yan, Jiawen He and Zheng Fang 
	\thanks{Hongkai Fan, Qinjing Xie, Bo Ouyang, Yaonan Wang, Zhi Yan and Jiawen He are affiliated with the College of Electrical and Information Engineering at Hunan University. Zheng Fang is associated with China Mobile Group Hunan Company Limited, Changsha 410082, China (email: fanhk, xieqinjing, ouyangbo, yaonan, yanzhi, hjwdaxia@hnu.edu.cn, fangzheng@hn.chinamobile.com).}}

\begin{document}
	\maketitle
	\thispagestyle{empty}
	\pagestyle{empty}

	\begin{abstract}
		
		Multi-Agent Pathfinding (MAPF) plays a critical role in various domains. Traditional MAPF methods typically assume unit edge costs and single-timestep actions, which limit their applicability to real-world scenarios. 
		To address this, MAPF$_R$ extends the MAPF to handle non-unit costs, thereby introducing more realistic modeling through real-valued edge costs and continuous-time actions. However, since most MAPF$_R$ formulations adopt a geometric collision model that allows conflicts to occur at any point along an edge, the resulting state space becomes unbounded, which significantly compromises the efficiency of existing solvers.
		In this paper, we propose MAPF$_Z$, a novel MAPF variant on graphs with non-unit integer costs. MAPF$_Z$ supports agents operating on flexible graphs without involving continuous time, thus preserving a finite state space while offering improved realism over classical MAPF.
		To efficiently solve MAPF$_Z$, we develop CBS-NIC, an enhanced Conflict-Based Search framework that incorporates time-interval-based conflict detection and constraints addition at the high level and an improved Safe Interval Path Planning (SIPP) algorithm at the low level. Additionally, we propose Bayesian Optimization for Graph Design (BOGD), a discretization method for non-unit edge costs that balances efficiency and accuracy, along with a sub-linear regret bound that ensures performance improves over time.
		Extensive experiments demonstrate that our approach outperforms state-of-the-art methods in runtime and success rate across diverse benchmark scenarios.
	
	\end{abstract}
	
	\textbf{Keywords:} Multi-agent pathfinding, Non-unit integer costs, Conflict-based search.

	\section{INTRODUCTION}
	
	Multi-agent systems are widely applied in various domains such as smart factories, drone swarm coordination, and operations at ports and terminals \cite{c3}. A key challenge in these systems is multi-agent pathfinding (MAPF), an NP-hard problem \cite{c4} that focuses on planning collision-free paths for multiple agents to reach their respective goals from given start positions in a shared environment, while minimizing the makespan.
	In a classical MAPF problem, the goal is to coordinate a set of agents operating on a graph with unit edge costs (see Fig.~\ref{fig1a}).
	However, many existing MAPF solvers rely on overly simplified and sometimes unrealistic assumptions that limit their practical applicability. 
	For instance, these solvers often assume that all edges in the graph have the same length and that all agents move synchronously, with each move action taking one time step. This assumption implies either that all agents must travel at the same speed or that they must adjust their velocity and acceleration profiles so that each move fits precisely into a single time step, ensuring all agents start and finish their movements simultaneously. 
	
	\begin{figure}[t]
		\centering
		\subfigure[]{
			\includegraphics[width=0.29\linewidth]{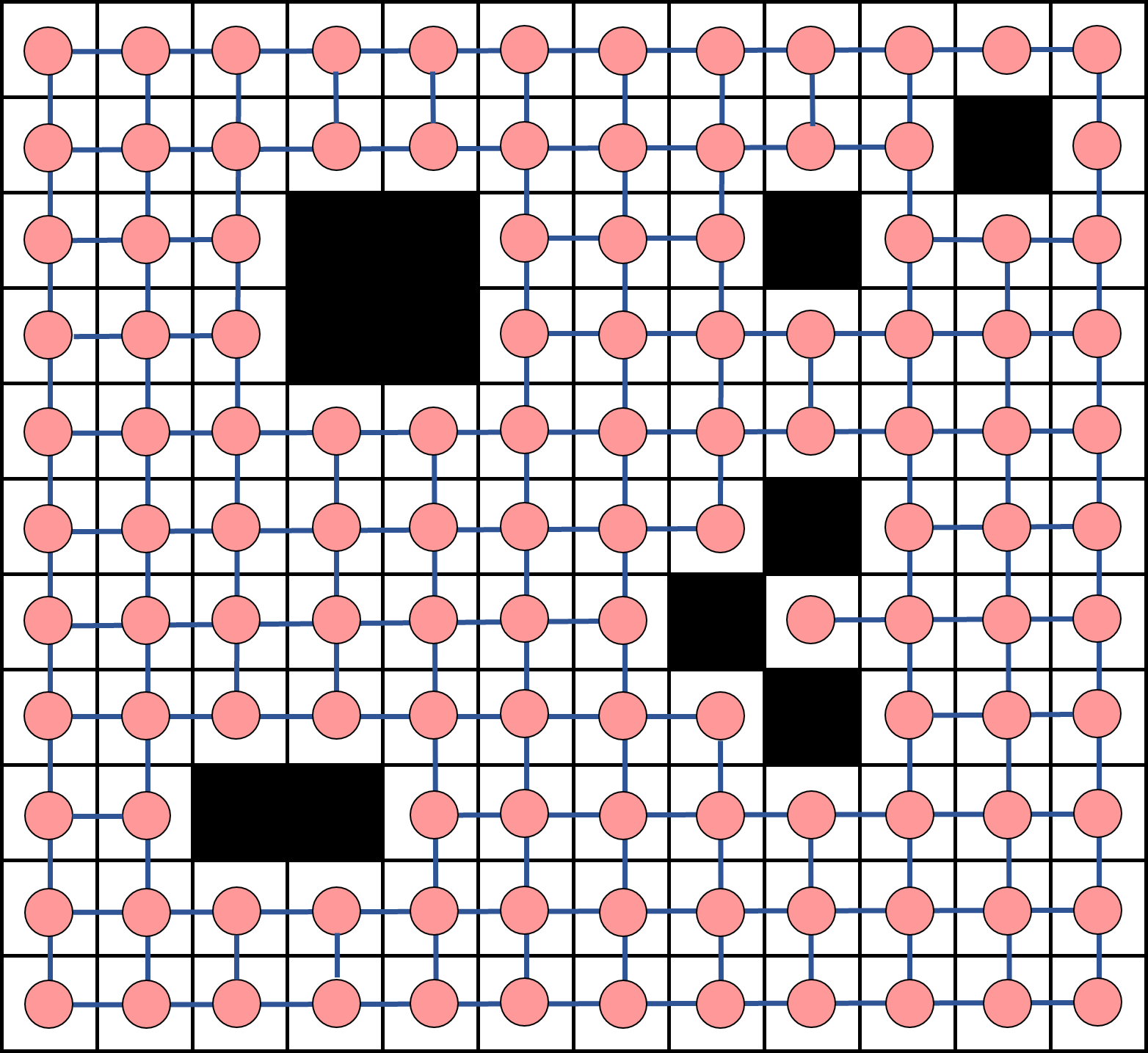}
			\label{fig1a}
		}
		\subfigure[]{
			\includegraphics[width=0.29\linewidth]{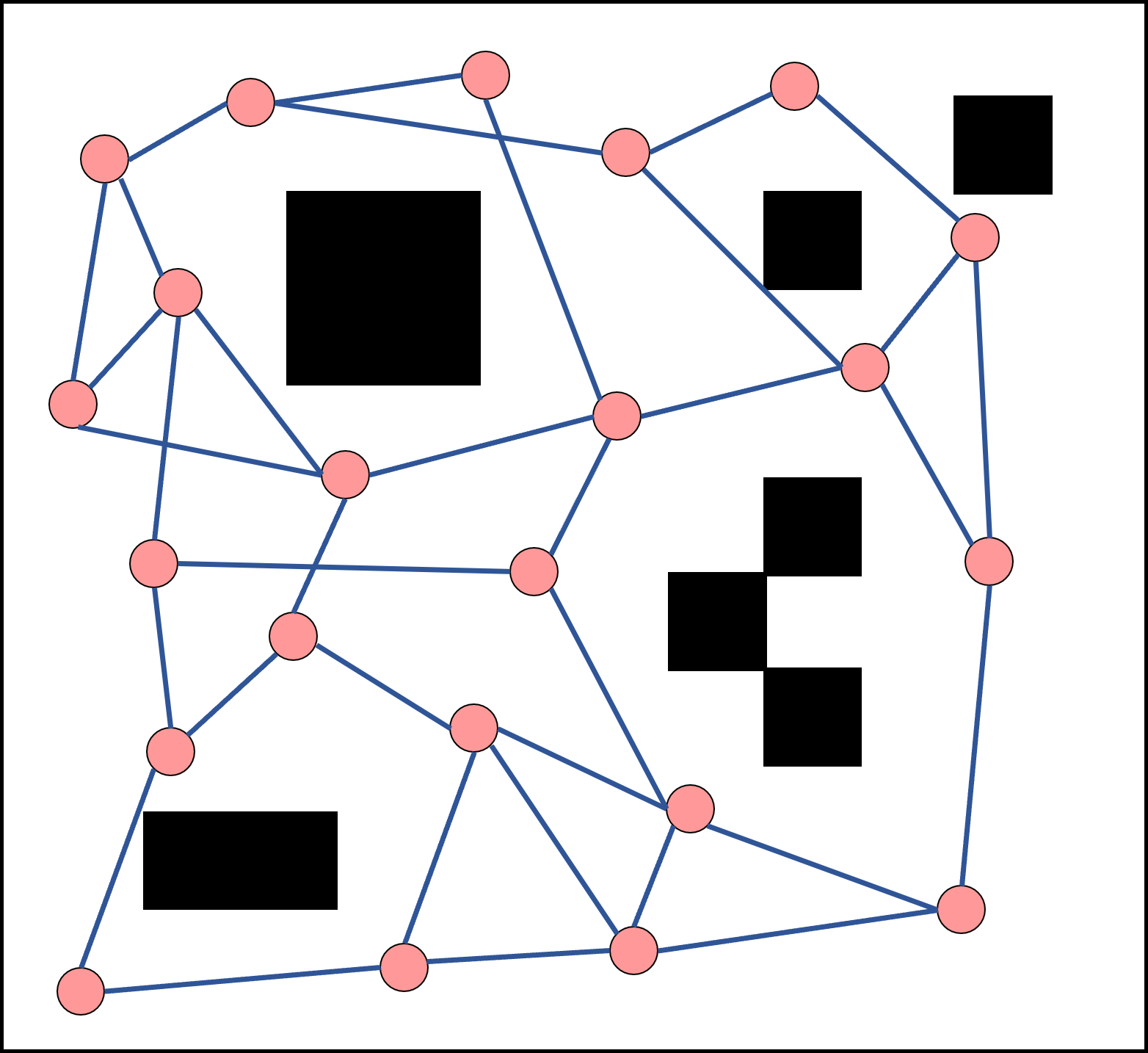}
			\label{fig1b}
		}
		\subfigure[]{
			\includegraphics[width=0.29\linewidth]{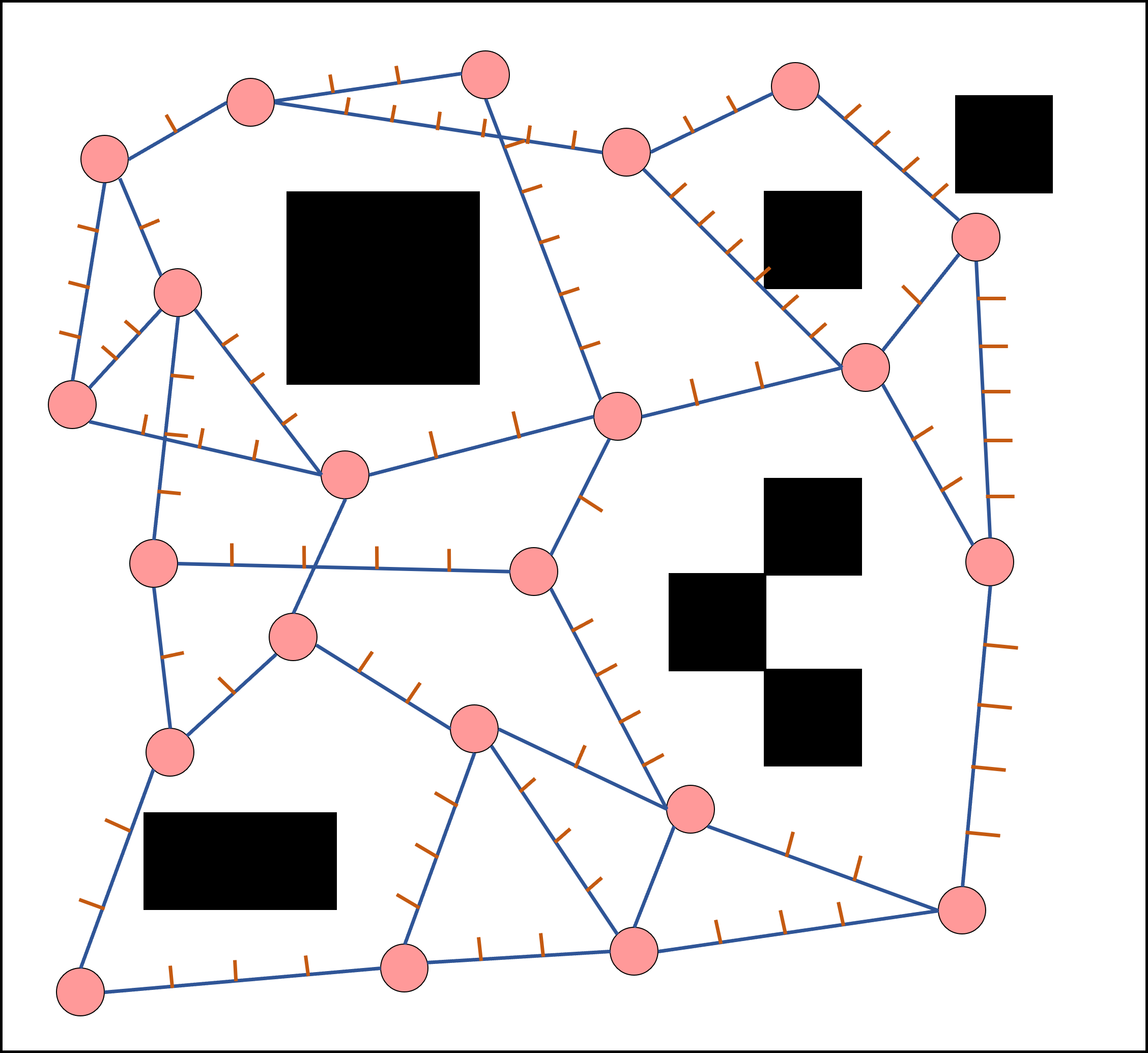}
			\label{fig1c}
		}
		\caption{Examples of graphs with different edge costs. (a) Unit cost graph. (b) Non-unit cost graph. (c) Non-unit integer cost graph.}
		\label{fig1}
	\end{figure}
	
	To overcome the unrealistic assumptions in the traditional MAPF formulation, several variants have been introduced, among which MAPF$_R$ \cite{c5} is a representative extension. This variant enhances the realism of MAPF by incorporating features such as asynchronous agent actions and weighted cost graphs. In MAPF$_R$, each vertex \( v \in V \) is assigned a positive traversal cost \( w(v) \in \mathbb{R}_{>0} \) (see Fig.~\ref{fig1b}), allowing the system to model agents operating in continuous metric spaces with heterogeneous movement costs.
	While these extensions significantly improve the applicability of MAPF in real-world scenarios, they also introduce additional computational challenges. In particular, many MAPF$_R$ formulations adopt a geometric collision model, where agents may collide at any point along edges in continuous time. This type of collision modeling leads to an unbounded state space, which severely hampers search efficiency and degrades solver performance.
	For instance, Continuous-time Conflict-Based Search (CCBS) \cite{c6} extends the classical CBS algorithm to support continuous time and variable-duration actions. Although CCBS guarantees optimality, its scalability is limited, typically handling only about 20 agents within a few minutes of runtime.
	To address asynchronous actions without relying heavily on geometric collision checking, the Loosely Synchronized Search (LSS) method \cite{ren2021loosely} adopts a duration-based conflict model. While this formulation ensures a bounded state space, scalability remains constrained; as reported in \cite{ren2021loosely}, the success rate decreases sharply with increasing agent numbers and approaches zero around 20 agents on most benchmark maps.
	To further enhance scalability, the Loosely Synchronized Rule-based Planning (LSRP) algorithm \cite{c7} introduces a caching mechanism that loosely synchronizes temporally close actions with different start times. Experiments show that LSRP can scale to an order of magnitude more agents than CCBS with substantially reduced runtimes. However, this gain comes at the cost of an average makespan increase of about 25\%. Its extension, LSRP* \cite{zhou2025lsrp}, improves solution quality through iterative anytime refinement, partially mitigating this drawback.
	Overall, these methods still struggle to balance scalability and solution quality, and tend to perform better in relatively simple scenarios.
	
	In this paper, we address the limitations of single-timestep actions and unit costs in the classical MAPF problem, as well as the low efficiency in solving the MAPF$_R$ variant. To this end, we introduce a new problem formulation, MAPF$_Z$, which extends MAPF to non-unit integer cost graphs (see Fig.~\ref{fig1c}). This formulation provides a better balance between solution efficiency and adaptability to real-world environments. 
	To address the challenges posed by non-unit integer costs in MAPF$_Z$, we propose CBS-NIC, a novel variant of CBS algorithm tailored for this setting. The key difficulty lies in transforming the original non-unit cost graphs into a suitable form with discrete integer edge costs. To this end, we introduce a discretization strategy guided by our proposed Bayesian Optimization for Graph Design (BOGD) method.
	BOGD is formulated as a bi-level optimization framework: the upper-level problem adopts Multi-Objective Bayesian Optimization (MOBO) to explore the space of candidate graph structures, while the lower-level problem evaluates each candidate by solving the MAPF$_Z$ instance using CBS-NIC. This design enables the generation of discretized graphs that strike a balance between fidelity to the original cost structure and compatibility with efficient pathfinding.

	The main contributions of this work can be itemized as follows:
	\begin{itemize}
		\item We introduce MAPF$_Z$, a novel MAPF variant that supports non-unit integer edge costs. It effectively bridges the gap between the idealized assumptions of traditional MAPF and the high computational complexity of weighted-cost graphs, offering a practical trade-off between modeling realism and efficiency.
		
		\item We propose CBS-NIC, an enhanced CBS variant designed for MAPF$_Z$ with non-unit integer edge costs. CBS-NIC introduces time-interval-based conflict detection, constraint definition, and an upgraded SIPP algorithm. It also incorporates improved heuristics, including enhanced versions of prioritizing conflicts (PC) and disjoint splitting (DS), to enhance planning efficiency.
		
		\item To discretize edge lengths in non-unit cost graphs, we propose BOGD, which balances solution efficiency and discretization error. Furthermore, we establish a sub-linear regret bound with respect to the number of iterations, demonstrating that the method becomes increasingly efficient and effective over time in minimizing cumulative regret.
		
	\end{itemize}
	
	The remainder of the paper is organized as follows: Section~\ref{sec2} reviews related work on MAPF. Section~\ref{sec3} formally defines the MAPF$_Z$ problem and revisits CBS along with relevant heuristic techniques. Section~\ref{sec4} details the implementation of CBS-NIC and the overall framework of BOGD. Section~\ref{sec5} presents the experimental results, and Section~\ref{sec6} concludes the paper with a summary and future directions.
	
	\section{RELATED WORK}\label{sec2}
	Path planning problem has been a popular study in the field of robotics for decades\cite{c10}. 
	In this section, we review the most relevant work from three complementary perspectives, covering MAPF with unit costs, MAPF with non-unit costs and continuous time, as well as graph design methods.
	
	For MAPF with unit costs, solvers can be categorized into optimal, bounded sub-optimal, and unbounded sub-optimal solvers. 
	Optimal solvers guarantee sum of cost (SOC) optimality and the most commonly used optimal MAPF solver is CBS\cite{c11}. CBS operates with a binary tree search at the high level, based on path conflicts, and uses A* at the low level to plan paths under conflict constraints. Variants of CBS include CBSH \cite{c12}, and CBSH2 \cite{c13}, which enhance heuristics using conflict and dependency graphs. \cite{c14} use constraints to reduce conflict tree nodes.
	Enhanced Partial Expansion A* (EPEA*)\cite{c22} is a heuristic-driven A* variant that selectively expands only minimal nodes using domain-specific knowledge, achieving improved efficiency in time-sensitive search across diverse benchmark domains.
	Although optimal solvers ensure the minimum makespan, they often require significant computation time, especially in complex environments and with many agents. 
	Bounded sub-optimal solvers provide solutions within a factor of \((1 + \epsilon)\) of the optimal cost, balancing solution quality and speed. Enhanced CBS (ECBS) \cite{c16} accelerates CBS by using focal search at both levels, while Explicit Estimation CBS (EECBS) \cite{c17} enhances this with explicit estimation search at the high level. The quality of solutions depends on \(\epsilon\), with larger values resulting in lower quality.
	Unbounded sub-optimal solvers prioritize speed, though they may compromise solution quality. For instance, 
	Priority Inheritance with Backtracking (PIBT) \cite{c18} plans in one-timestep increments, assigning unique priorities to agents at each step. These solvers generally focus on speed over solution quality.
	
	For MAPF with non-unit costs, Walker et al.~\cite{c19} introduced the MAPF$_R$ formulation and extended the Increasing cost tree search (ICTS) algorithm by constructing modified MDDs with cost intervals and reformulating the high-level search accordingly.
	Improved CBS with Helpful Bypass (ICBS-HB) \cite{c5} enhances the standard CBS framework by introducing a conflict bypassing strategy during path planning.
	ECBS with continuous time (ECBS-CT) \cite{c20} adapted the CBS framework to continuous time, and developed a bounded sub-optimal extension of SIPP to manage discrete time steps and rectilinear movements between neighboring vertices. Walker et al. \cite{c21} introduced CBS with constraint layering (CBS-CL), an extension of CBS that deals with non-unit edge costs and hierarchical movement abstractions. Although CBS-CL is complete in its solutions, it does not account for continuous time and does not ensure optimality. CCBS \cite{c6} builds on CBS by integrating an adapted version of SIPP for low-level search and introduces new conflict types and constraints at the high level, effectively addressing continuous time and geometric constraints. 
	
	
	Graph design plays a critical role in MAPF, as it directly affects the efficiency of planning algorithms. One of the most widely used approaches is the probabilistic roadmap (PRM) \cite{geraerts2004comparative}, which constructs a graph by sampling collision-free configurations in the continuous space and connecting nearby nodes via a local planner to form feasible edges. The resulting roadmap captures the connectivity of the environment and enables efficient path search between start and goal configurations. 
	Building upon PRM, a large body of work has explored more advanced roadmap construction methods for multi-agent settings. For example, Cooperative Timed Roadmaps (CTRMs) \cite{okumura2022ctrms} incorporate both spatial and temporal information to better coordinate agents and reduce conflicts. MRPT \cite{neto2018multi} extends PRM to distributed multi-robot systems, enabling decentralized coordination with reduced communication overhead. Other approaches focus on optimizing roadmap structures, such as ODRM \cite{henkel2020optimized}, which learns directed graphs with collision-avoidance patterns. Additionally, topology aware roadmap generation methods \cite{stenzel2022automated} and Voronoi-based heuristics \cite{wang2015voronoi} exploit geometric properties of the environment to improve planning efficiency.
	Despite these advances, most existing graph design methods are not jointly optimized with the underlying MAPF problem. They either rely on continuous representations with high computational overhead or construct graph structures without explicitly accounting for the interplay between temporal discretization and conflict resolution. As a result, the resulting representations are not well aligned with the requirements of efficient multi-agent planning, leading to limited scalability in dense, large-scale scenarios.

	\section{Problem definition}\label{sec3}
	In this section, we first present MAPF$_Z$, a new variant tailored for scenarios with non-unit costs, and then review the CBS algorithm along with relevant heuristic techniques.
	
	\subsection{Definition of MAPF$_Z$ with non-unit integer costs}
	
	The MAPF$_Z$ problem is a variant of the classical MAPF problem, introduced to model more realistic temporal behaviors by incorporating non-unit, positive integer edge costs. Formally, MAPF$_Z$ is defined on a positively weighted graph $\mathcal{G} = (V, E, W)$, where $V$ is the set of vertices, $E$ is the set of edges, and $W: E \rightarrow \mathbb{Z}_{>0}$ assigns a positive integer cost to each edge $e \in E$. These edge costs represent the durations required for agents to traverse the corresponding edges, thereby introducing variable action durations into the planning model.
	This contrasts with the classical MAPF formulation, which assumes unit edge costs and actions that complete in a single timestep. While the classical MAPF formulation simplifies computation, it often fails to capture the variability in edge lengths found in real-world environments. In MAPF$_Z$, although time remains discretized, agents may spend multiple timesteps on a single edge, introducing more complex temporal interactions.
	
	Let $A$ denote the set of agents, where $|A| < |V|$, ensuring that each agent is assigned a unique start and goal location. The initial and goal positions are specified by two injective mappings $\mathcal{S}, \mathcal{T}: A \to V$, where $\mathcal{S}_i = \mathcal{S}(a_i)$ and $\mathcal{T}_i = \mathcal{T}(a_i)$ denote the start and goal vertices of agent $a_i$, respectively. The injectivity of these mappings ensures that no two agents share the same start or goal location.
	A plan for agent $i$ is represented as a sequence $\pi_i = \{\langle v_i^1,t_i^1 \rangle,\ldots,\langle v_i^{T_i},t_i^{T_i}\rangle\}$, where $v_i^{\tau} \in V$ indicates the vertex reached at time $t_i^{\tau} \geq 0$. The same vertex may appear in consecutive elements of the sequence, i.e., $v_i^{\tau}=v_i^{\tau-1}$, which represents a wait action at that vertex. The plan $\pi_i$ is considered feasible if it satisfies the following conditions: (1) the vertex sequence $[v_i^1, \ldots, v_i^{T_i}]$ forms a valid path from the start vertex $\mathcal{S}_i$ to the goal vertex $\mathcal{T}_i$ in graph $\mathcal{G}$; (2) for all $j \in \{2, \ldots, T_i\}$, the timing constraint $t_i^j \geq t_i^{j-1} + W(v_i^{j-1}, v_i^j)$ holds, meaning the agent cannot arrive at a new vertex before completing the traversal of the previous edge.
	The ultimate objective is to compute a conflict-free solution that minimizes the makespan, defined as $\max_{i=1}^{k} c(\pi_i)$, where $c(\pi_i)$ denotes the cost of the path $\pi_i$ for agent $i$ and $k$ is the number of agents.

	\subsection{Review of CBS and heuristic techniques}
	\subsubsection{Conflict-based search for MAPF}
	CBS is a two-level optimal algorithm for solving the MAPF problem. At the low level, CBS performs a best-first search for each agent to find a shortest path that satisfies the current set of constraints. At the high level, it conducts a best-first search over a binary Constraint Tree (CT), where each CT node contains a set of constraints used to avoid conflicts, along with a corresponding plan consisting of individual paths for all agents that respect these constraints. The cost of a CT node is defined as the sum of the costs of all paths in its plan. CBS operates by checking for conflicts in the current CT node’s plan; if the plan is conflict-free, the algorithm terminates with an optimal solution. Otherwise, CBS selects one of the detected conflicts and resolves it by splitting the current CT node into two child nodes. 
	In each child node, a negative constraint of the form $\overline{\langle i, v, t \rangle}$ is added to prevent one of the conflicting agents from occupying the contested vertex $v$ at time $t$. Consequently, this agent’s path must be replanned using the low-level search, while the paths of other agents remain unchanged. By generating two child CT nodes per conflict and exploring both resolutions, CBS guarantees completeness and optimality.
	
	\subsubsection{Prioritizing conflicts}
	PC heuristic \cite{c24} guides the selection of which conflict to resolve when expanding a CT node. Since different selection strategies can result in CT of varying sizes, the choice of heuristic significantly affects both the runtime and success rate of the algorithm.
	Conflicts in the PC heuristic are classified into three types: cardinal, semi-cardinal, and non-cardinal.
	A conflict is cardinal if splitting a CT node over it increases the costs of both child nodes, semi-cardinal if only one child’s cost increases, and non-cardinal if neither child’s cost increases. The PC strategy prioritizes resolving cardinal conflicts, followed by semi-cardinal and non-cardinal conflicts, significantly reducing the number of expanded CT nodes and enhancing algorithm efficiency.
	
	\subsubsection{Disjoint splitting}
	The standard conflict splitting strategy is non-disjoint, i.e., when it splits a problem into two subproblems, some solutions are shared by both subproblems, which can create duplication of search effort.
	To address this issue, DS strategy \cite{c25} introduces positive and negative constraints to handle vertex conflicts. A negative constraint $\overline{\langle i, v, t \rangle}$ means agent \(i\) must not be at \(v\) at time \(t\), while a positive constraint $\langle i, v, t \rangle$ means agent \(i\) must be at \(v\) at time \(t\).
	When splitting a CT node over a conflict, it selects one of the conflicting agents and generates two child nodes: one with a negative constraint and the other with a positive constraint.
	This DS strategy preserves the theoretical properties of CBS while reducing redundant search efforts, and it can also improve the success rate in solving instances.
	
	\section{Method}\label{sec4}
	This section presents the implementation of CBS-NIC, which incorporates time-interval-based conflict detection and constraint definition tailored to MAPF$_Z$ at the high level, and integrates an enhanced SIPP algorithm at the low level. In addition, we introduce a novel framework BOGD, and provide a theoretical analysis of its performance, with a particular focus on the asymptotic regret bound.

	\subsection{CBS-NIC solver}
	
	\subsubsection{Time-interval-based conflict detection}
	CBS-NIC is designed for MAPF$_Z$, where each action has an integer duration, i.e., the execution time of an action is an integer multiple of a unit time step. 
	Under this setting, the execution of an action naturally induces a time interval during which an agent occupies a vertex or an edge.
	
	Let \(e_i^{t_i} = (v_i^{t_i}, v_i^{t_i^e}) \in E\) denote the edge traversed by agent \(i\), where \(t_i\) and \(t_i^e\) are the start and end time steps of the traversal, respectively, with duration \(t_i^e - t_i \in \mathbb{N}\).
	The traversal induces a spatio-temporal occupancy over the half-open time interval \([t_i, t_i^e)\).
	Specifically, agent \(i\) occupies the start vertex \(v_i^{t_i}\) at time step \(t_i\), and for any time step \(t \in (t_i, t_i^e)\), it is considered to be traversing and thus occupying edge \(e_i = (v_i^{t_i}, v_i^{t_i^e})\).
	At time step \(t_i^e\), the agent reaches the target vertex \(v_i^{t_i^e}\), which is treated as the start of its subsequent occupancy.
	
	The formal definitions of conflicts are given as follows:
	\begin{definition}[Vertex Conflict]\label{def1}
		A \textit{vertex conflict} occurs when two agents \(i\) and \(j\) occupy the same vertex \(v \in V\) at the same time step \(t \in \mathbb{N}\). Formally, a vertex conflict is defined as:
		\[
		\exists v \in V, \, t \in \mathbb{N}, \, \text{s.t.} \, v_i^t = v_j^t = v,
		\]
		where \(v_i^t\) and \(v_j^t\) represent the positions of agents \(i\) and \(j\) at time step \(t\). Such a conflict is denoted by the tuple \(\langle i, j, v, t \rangle\).
	\end{definition}
	
	\begin{definition}[Edge Conflict]\label{def2}
		An \textit{edge conflict} occurs when two agents \(i\) and \(j\) traverse the same edge in opposite directions during overlapping time intervals. Formally, an edge conflict is defined as:
		\[
		\exists e_i^{t_i}, e_j^{t_j} \in E, \, \text{s.t.} \, v_i^{t_i^e} = v_j^{t_j} \, \text{and} \, v_j^{t_j^e} = v_i^{t_i},
		\]
		where \(e_i^{t_i} = (v_i^{t_i}, v_i^{t_i^e})\) and \(e_j^{t_j} = (v_j^{t_j}, v_j^{t_j^e})\) represent the edges traversed by agents \(i\) and \(j\), starting at time steps \(t_i\) and \(t_j\) and ending at time steps \(t_i^e\) and \(t_j^e\), respectively. The time intervals are considered overlapping if:
		\[
		t_i < t_j^e \leq t_i^e \quad \text{or} \quad t_i \leq t_j < t_i^e.
		\]
		Such an edge conflict is denoted by the tuple \(\langle i, j, e_i^{t_i}, e_j^{t_j} \rangle\).
	\end{definition}
	
	Fig. \ref{fig2}(a) illustrates a 4-agents MAPF$_Z$ instance. The dashed borders indicate each agent's start location, and the same-color borders mark their destinations. For agents \(i\) and \(j\), an edge conflict occurs on edge L-M. 
	Fig. \ref{fig2}(b) shows the unsafe interval of edge conflict between agents \(i\) and \(j\). The time interval for agent \(i\) to cross edge \(e_i^{t_i} = (v_i^{t_i}, v_i^{t_i^e})\) is \((t_i, t_i^e)\), and for agent \(j\) to cross edge \(e_j^{t_j} = (v_j^{t_j}, v_j^{t_j^e})\) is \((t_j, t_j^e)\). The unsafe interval is when the agents simultaneously traverse the edge. This occurs in two cases:
	1) Agent \(j\) starts before agent \(i\) finishes, i.e., \(t_i < t_j^e \leq t_i^e\).
	2) Agent \(j\) starts after agent \(i\) has already started, i.e., \(t_i \leq t_j < t_i^e\).
	If \(t_j^e = t_i^e\) or \(t_i = t_j\), it indicates that both agents traverse the edge simultaneously. If \(t_i = t_j^e\) or \(t_j = t_i^e\), it implies a vertex conflict at either L or M.

	\begin{figure}
		\centering
		\includegraphics[width=3.5in]{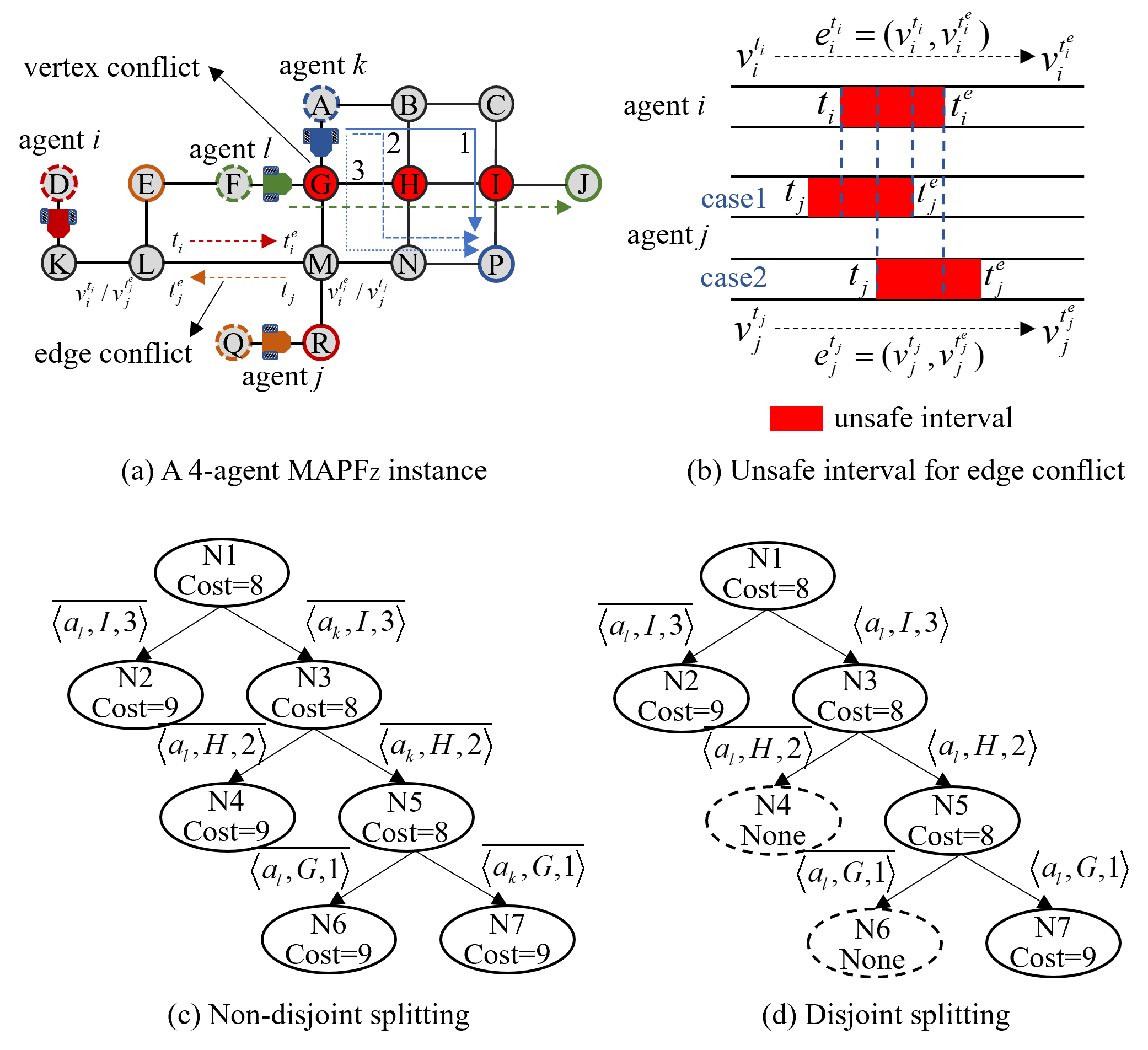}
		\caption{Conflict detection and constraint addition of CBS-NIC framework.}
		\label{fig2}
	\end{figure}

	\subsubsection{Time-interval-based constraints definition}
	In CBS-NIC, the high-level search selects the minimum cost leaf node \( N \) and performs conflict detection on its solution \( N.\Pi \). If no conflicts are found, \( N \) is the valid solution, and the search halts. Otherwise, the high-level search expands \( N \) to resolve the conflict by generating two new CT nodes, \( N_i \) and \( N_j \). To compute the constraints to add to \( N_i \) and \( N_j \), CBS-NIC determines the unsafe intervals for each vertex or edge. The specific constraints are defined as follows: 
	For a vertex conflict \(\langle i, j, v, t \rangle\), the added constraints are designed to prevent agent \(i\) from being at vertex \(v\) at time step \(t\), or to prevent agent \(j\) from being at vertex \(v\) at time \(t\). As a result, the unsafe time for vertex \(v\) is time step \(t\). This can be mathematically expressed as \(\overline{\langle i, v, t \rangle}\) and \(\overline{\langle j, v, t \rangle}\).
	For an edge conflict \(\langle i, j, e_i^{t_i}, e_j^{t_j} \rangle\), the added constraints are to prevent agent \(i\) from occupying edge \(e_i^{t_i}\) during the time interval \((t_j, t_j^e)\), or to prevent agent \(j\) from occupying edge \(e_j^{t_j}\) during the time interval \((t_i, t_i^e)\). This can be mathematically expressed as: 
	\(\overline{\langle i, e_i^{t_i}, (t_j, t_j^e) \rangle}\) and \(\overline{\langle j, e_j^{t_j}, (t_i, t_i^e) \rangle}\).
	
	\subsubsection{Disjoint splitting for CBS-NIC}
	To reduce the expansion of CT nodes, we introduce the DS technique, which incorporates both positive and negative constraints. For a conflict \(\langle i, j, v, t \rangle\), CBS-NIC-DS generates two child CT nodes and adds a negative constraint $\overline{\langle i, v, t \rangle}$ to one and a positive constraint \(\langle i, v, t \rangle\) to the other. 
	Negative constraints $\overline{\langle i, v, t \rangle}$ indicate that agent \(i\) cannot occupy vertex \(v\) at time step \(t\) (all previously introduced constraints are negative constraints). On the other hand, positive constraints \(\langle i, v, t \rangle\) specify that agent \(i\) must occupy vertex \(v\) at time step \(t\). Positive constraints are composed of multiple negative constraints: for agent \(i\), it cannot reach any vertices other than \(v\) at time \(t\), and for other agents (\(\forall j \in A, j \neq i\)), they cannot reach vertex \(v\) at time step \(t\).
	The definition of a positive constraint is as follows:
	\begin{definition}[Positive Constraint]\label{def3}
		For the positive constraint \(\langle i, v, t \rangle\), it is equivalent to the union of two sets of negative constraints. 
		The first set of negative constraints \(C_i\) ensures that agent \(i\) cannot occupy any vertex other than \(v\) at time \(t\), which is mathematically represented as:
		\[
		C_i = \{\overline{\langle i, v', t \rangle} \mid v' \neq v, v' \in V \}.
		\]
		The second set of negative constraints \(C_{\neg i}\) ensures that no other agent can occupy vertex \(v\) at time \(t\), which is represented as:
		\[
		C_{\neg i} = \{\overline{\langle j, v, t \rangle} \mid j \neq i, j \in A \}.
		\]
		The positive constraint for agent \(i\) to occupy vertex \(v\) at time \(t\) is the union of these two sets:
		\[
		C_p = C_i \cup C_{\neg i}.
		\]
	\end{definition}
	
	In Fig. \ref{fig2}(a), for agents \(k\) and \(l\), a vertex conflict arises at one of the vertices G, H, or I.
	In Fig. \ref{fig2}(c), CBS-NIC with non-disjoint splitting resolves conflicts by generating multiple CT nodes for each conflict, resulting in redundant nodes and higher computational costs. In contrast, Fig. \ref{fig2}(d) demonstrates CBS-NIC-DS with disjoint splitting, which prunes infeasible nodes and reduces the number of CT nodes, achieving the same solution with fewer leaf nodes and improved efficiency.
	
	\subsubsection{SIPP for the low-level of CBS-NIC}
	The low-level solver of CBS-NIC is built on constrained Safe Interval Path Planning (SIPP) \cite{c26}. SIPP computes a set of safe intervals for each location $v \in V$, where a safe interval is defined as the maximal contiguous time interval during which an agent can occupy or arrive at $v$ without colliding with moving obstacles. Unlike the original SIPP, which minimizes the total duration of action sequences in a continuous-time domain, the enhanced SIPP operates in a discrete-time domain and directly incorporates integer edge weights. This adaptation ensures compatibility with MAPF$_Z$, where edge traversal costs are explicitly defined as integers.
	
	To adapt SIPP as the low-level solver for CBS-NIC, it is modified to prohibit actions that violate the given constraints, ensuring compliance with the conflict resolution process.
	For a vertex conflict constraint \(\overline{\langle i, v, t \rangle}\), the agent is prohibited from occupying vertex \(v\) at time step \(t\). This is achieved by splitting the safe intervals of \(v\) accordingly. For instance, if \(v\) initially has a single safe interval \([0, \infty)\), it is split into two intervals: \([0, t)\) and \((t, \infty)\), ensuring the agent cannot occupy \(v\) during the restricted time step.
	For an edge conflict constraint \(\langle i, e_i^{t_i}, (t_j, t_j^e) \rangle\), agent \(i\) is forbidden from traversing edge \(e_i^{t_i} = (v_i^{t_i}, v_i^{t_i^e})\) during the time interval \((t_j, t_j^e)\). If the planned traversal by agent \(i\) overlaps with this interval, i.e., the agent attempts to move from \(v_i^{t_i}\) to \(v_i^{t_i^e}\) starting at any time \(t \in (t_j, t_j^e)\), the action is considered invalid. 
	To enforce this, the safe intervals of edge \(e_i^{t_i}\) are adjusted by removing the interval \((t_j, t_j^e)\), which represents the conflict period. 
	To satisfy the constraint, the agent must wait at \(v_i^{t_i}\) until the start of the next safe interval, i.e., until \(t \geq t_j^e\), before performing the traversal.

	\begin{algorithm}
		\BlankLine
		\KwIn{A tuple $\langle \mathcal{G},S,G \rangle$}    
		\KwOut{The conflict-free solution.}
		Initialize path set \(\Pi \gets \emptyset\)\;
		\ForEach{agent \(i\)}{
			\(\pi_i \gets SIPP(\mathcal{G}, S_i, G_i, \emptyset)\)\;
			Append \(\pi_i\) to \(\Pi\) \;
		}
		conflictNodes $\gets getPCnodes(\Pi,\emptyset)$\;
		$start \gets (\emptyset$, conflictNodes, $\Pi$, cost($\Pi$))\;
		Create OPEN and add $start$ to OPEN\;
		\While{OPEN is not empty}{\label{line7}
			$N \gets$ OPEN.$pop()$\;
			\If{$N.conflictNodes$ is empty}{\label{line9}
				\textbf{return} $N.\Pi$\label{line10}
			}
			$c \gets getHighestPCnode$(N.conflictNodes)\;
			\For{n \textbf{in} $c.nodes$}{ 
				Create leaf node newNode of root $N$\;
				newNode.$c_p$ $\gets$ n.constraints\; 
				newNode.$\Pi \gets$ n.$\Pi$\; 
				newNode.cost $\gets$ cost(n.$\Pi$)\;
				newNode.conflictNodes $\gets getPCnodes$(newNode.$\Pi$, newNode.$c_p$)\;  
				OPEN.push(newNode)\;  
				}
		}	
		\textbf{return} False\label{line20}
		\caption{CBS-NIC pseudo code.}
		\label{alg1}
	\end{algorithm}
	
	\begin{algorithm}
		\BlankLine
		\KwIn{A tuple $\langle \mathcal{G},S,G \rangle$, constraints $C_{all}$, paths $\Pi$.}    
		\KwOut{The set of conflict nodes conflictNodes.}
		conflictNodes = \{\}\;
		\For{$i \gets 1, j \gets i + 1$ \textbf{to} $k$}{
			\For{$t \gets 1$ \textbf{to} $T_{max}$}{
				Get $t_i,t_i^e,t_j,t_j^e$ according to $\pi_i, \pi_j$\;
				// Vertex conflict\;
				\If{$v_i^{t_i} = v_j^{t_j}$}{
					conflict = $\left \langle i,j,v_i^{t_i},t\right \rangle$\;
					$C_p = \{\left \langle i,v_i^{t_i},t\right \rangle,\overline{\left \langle i,v_i^{t_i},t\right \rangle}\}$\;
					\textbf{break}\;
				}
				// Edge conflict\;
				\ElseIf{($v_i^{t_i} = v_j^{t_j^e}$ and $v_i^{t_i^e} = v_j^{t_j}$) and ($t_i < t_j^e \leq t_i^e$ or $t_i \leq t_j < t_i^e$)}{
					conflict = $\langle i, j, e_i^{t_i}, e_j^{t_j} \rangle$\;
					$C_p = \{\langle i, e_i^{t_i}, (t_j, t_j^e) \rangle, \langle j, e_j^{t_j}, (t_i, t_i^e) \rangle\}$\;
					\textbf{break}\;
				}
			}
			\If{conflict is empty}{\textbf{continue}\;}
			conflictNode.conflict = conflict\;
			\For{$c_p$ \textbf{in} $C_p$}{
				n.constraints = $c_p \cup C_{all}$\;
				n.$\Pi$ = SIPP($\mathcal{G},S,G,n.constraints$)\;
				conflictNode.nodes.$push$(n)\;
			} 
			conflictNodes.$push$(conflictNode)\;
		}
		Use PC strategy to prioritize nodes in conflictNodes\;
		\textbf{return} conflictNodes
		\caption{Get priority conflict node pseudo code.}
		\label{alg2}
	\end{algorithm}

	\subsubsection{CBS-NIC pseudo-code}
	The pseudo code for the CBS-NIC algorithm is shown in Algorithm \ref{alg1}. 
	Initially, the algorithm starts by constructing an initial path set \(\Pi\). For each agent \(i\), it computes the initial path \(\pi_i\) using the SIPP algorithm and adds it to the path set \(\Pi\). The input of SIPP includes the graph \(\mathcal{G}\), start location \(S_i\), goal location \(G_i\), and the constraint set $\emptyset$ (lines 1-4).
	Next, conflict nodes, which contain all conflicts, their corresponding constraints, and the paths under those constraints, are generated using \(getPCnodes \) (Algorithm \ref{alg2}). The initial conflict tree (CT) node is created, including the constraint set $\emptyset$, conflict nodes, the path set \(\Pi\), and the cost of path. This initial CT node is then added to the OPEN list for further processing (lines 5-7). 
	The main loop continues until the OPEN list is empty. At each iteration, the node \(N\) with the minimal cost is extracted from the OPEN list. If no conflict nodes are found in node \(N\), the algorithm returns the current set of paths \(N.\Pi\) (lines 8-11). Otherwise, the algorithm selects a conflict node 
	\(c\) using \(getHighestPCnode\), which prioritizes cardinal conflicts over semi-cardinal conflicts, and semi-cardinal conflicts over non-cardinal conflicts. It then iterates through the nodes in \(c.\text{nodes}\) involved in the conflict, creating a leaf node \(newNode\) under the root CT node \(N\). The constraint \(c_p\) of \(newNode\) is set to \(n.\text{constraint}\), the path \(\Pi\) of \(newNode\) is \(n.\Pi\), and the cost of \(newNode\) is $cost(n.\Pi)$. 
	The conflict nodes of \(newNode\) are generated using Algorithm \ref{alg2}, with the input being the path and constraint of \(newNode\). The \(newNode\) is then added to the OPEN list. If no solution is found, the algorithm returns False (lines 12-19).
	
	The pseudo-code for obtaining the priority conflict node is outlined in Algorithm \ref{alg2}. The algorithm takes as input a tuple \(\langle \mathcal{G}, S, G \rangle\), representing the graph, start, and goal locations for the agents, along with the global constraint set \(C_{all}\) and the current paths \(\Pi\). An empty list is initialized to store detected conflicts and their associated nodes. 
	The algorithm iterates over all agent pairs \(i, j\) and examines their paths at each timestep up to \(T_{max}\). 
	For each pair of agents, the algorithm first computes the start timestep $t_i,t_j$ and end timestep $t_i^e,t_j^e$ of the agent $i$ and $j$ crossing the edge $e$ at time step $t$ according to their paths $\pi_i,\pi_j$ (lines 1-4).
	It then first checks for a vertex conflict as defined in Definition \ref{def1}. If a vertex conflict exists, the corresponding constraint set 
	$C_p = \{\left \langle i,v_i^{t_i},t\right \rangle,\overline{\left \langle i,v_i^{t_i},t\right \rangle}\}$
	is added. Otherwise, it checks for an edge conflict as defined in Definition \ref{def2}. If an edge conflict is found, the corresponding constraint set $C_p = \{\langle i, e_i^{t_i}, (t_j, t_j^e) \rangle, \langle j, e_j^{t_j}, (t_i, t_i^e) \rangle\}$ is added (lines 6-14).
	After detecting a conflict, the algorithm creates a conflict node, associates it with the constraints \(C_p\) and \(C_{all}\), computes the updated path \(\Pi\) for the node using SIPP, and adds the node to the list of conflict nodes. The conflicts are categorized as cardinal, semi-cardinal, and non-cardinal to prioritize resolution. Finally, the algorithm returns conflictNodes(lines 15-24)
	
	\subsection{Bayesian Optimization for Graph Design}
	
	\subsubsection{The framework of bayesian optimization}
	
	\begin{figure}
		\centering
		\includegraphics[width=3.5in]{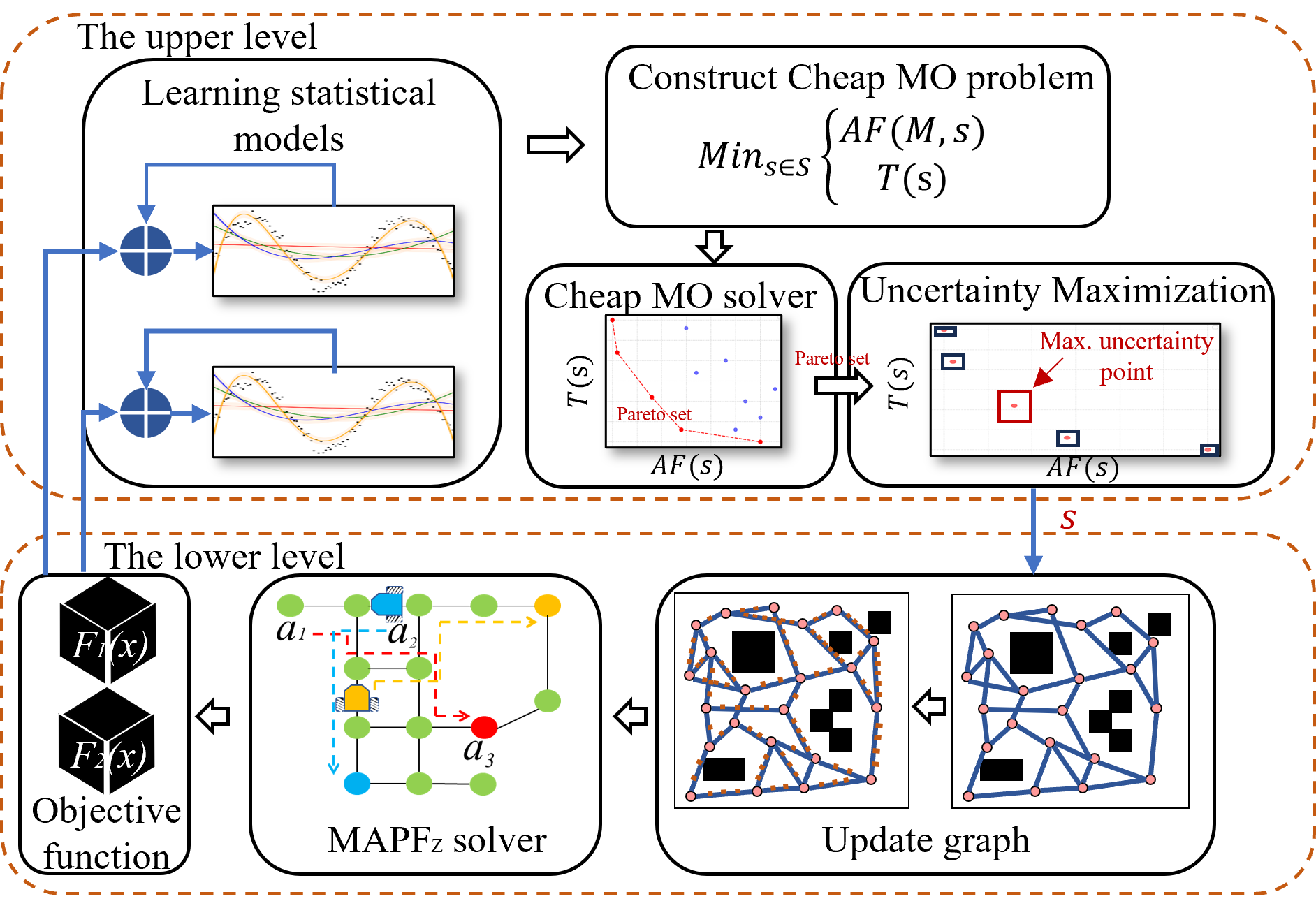}
		\caption{The framework of BOGD.}
		\label{fig3}
	\end{figure}
	
	The graph discretization problem is formulated as a multi-objective optimization task that seeks to balance computational efficiency with discretization accuracy. Let \( s \) denote the discretization parameter. The first objective is the algorithm runtime \( T(s) \), modeled as an expensive black-box function. The second objective is the discretization error, defined as $C(s) = \sum_{i=1}^{k} \sum_{e \in \pi_i} \left| w_e - \operatorname{round}\left( \frac{w_e}{s} \right) \cdot s \right|$,
	where \( w_e \) is the original edge weight, \( \pi_i \) is the path of agent \( i \), and \( k \) is the total number of agents. The error measures the cumulative deviation between original and discretized weights along the paths of all agents.
	
	To tackle this problem, the BOGD framework, illustrated in Fig.~\ref{fig3}, adopts a two-level hierarchical structure. 
	At the lower level, for any candidate discretization parameter \( s \), the original real-valued edge weights \( w_i \in \mathbb{R}_{>0} \) are transformed into integer weights by computing \( w_i' = \operatorname{round}(w_i / s) \). This transformation enables the application of CBS-NIC, which operates on integer-weighted graphs, while implicitly maintaining an approximation to the original costs with a bounded discretization error.
	At the upper level, BOGD searches for the optimal discretization parameter \( s \). A surrogate model \( M \) is built to approximate the unknown runtime function \( T(s) \) based on historical observations. The error function \( C(s) \), being deterministic and known, is incorporated directly. A multi-objective optimization (MO) problem is then formulated using the acquisition function (AF) derived from the surrogate model and the known error function.
	
	To initialize the optimization process, a biased initial population is generated for a genetic algorithm. This population is evaluated using the score function \( \widetilde{F}(s) = AF(M, s) + C(s) \), which favors candidates with both high potential and high uncertainty, the latter being captured by the AF. NSGA-II \cite{c26}, a widely adopted multi-objective genetic algorithm, is employed to evolve the population. The algorithm ranks solutions via fast non-dominated sorting and applies crossover and mutation operations to efficiently explore the search space.
	After obtaining a Pareto front from the surrogate multi-objective problem, BOGD selects a candidate \( s \) that minimizes \( \widetilde{F}(s) \) from the front to be evaluated by the true black-box function. This strategy promotes exploration by prioritizing solutions with high uncertainty, thereby improving model accuracy and facilitating convergence toward the true Pareto front.
	The acquisition function used in this framework follows the Lower Confidence Bound (LCB) principle:
	\[
	LCB(s) = \mu^*(s) - \sqrt{2 \log\left( t^{d/2+2} \pi^{2}/3\delta \right)} \cdot \sigma^*(s),
	\]
	where \( \mu^*(s) \) and \( \sigma^*(s) \) denote the posterior mean and standard deviation from the surrogate model, \( t \) is the current iteration, \( d \) is the dimensionality of the design space (here, \( d = 1 \)), and \( \delta \in (0,1) \) determines the confidence level\cite{c27}. This acquisition function facilitates the trade-off between exploitation and exploration.

	\subsubsection{Regret bound}
	
	In the MAPF problem, the objective is to optimize the discretization parameter \( s \in [s_{\min}, s_{\max}] \) to balance the algorithm runtime \( T(s) \), and the discretization error \( C(s) \). The BOGD framework addresses this multi-objective optimization problem by iteratively selecting \( s_t \) at each iteration \( t \). This subsection derives a sublinear regret bound for the cumulative regret \( R(s^*) \), ensuring the theoretical guarantees of the BOGD approach.
	
	\begin{theorem}
		\label{thm:regret_bound}
		Let \( s^* \in S^* \) be a solution in the Pareto optimal set, and \( s_t \in S_t \) be the solution obtained by BOGD at iteration \( t \). Define the total regret as \( R(s^*) = \|R_T(s^*), R_C(s^*)\| \) using the maximum norm, where \( R_T(s^*) = \sum_{t=1}^{t_m} (T(s_t) - T(s^*)) \) and \( R_C(s^*) = \sum_{t=1}^{t_m} (C(s_t) - C(s^*)) \). Then, with probability at least \( 1 - \delta \), the regret is bounded as:
		\[
		R(s^*) \leq \sqrt{d t_m \beta_{t_m} \gamma_{t_m}} + W D \sqrt{t_m},
		\]
		where \( d = 1 \) since \( s \) is one-dimensional, \( \beta_{t_m} = 2 \log\left(\frac{\pi^2 t_m^2}{6 \delta}\right) \) with \( \delta \in (0, 1] \) as the confidence parameter, \( \gamma_{t_m} \) is the maximum information gain of \( T(s) \) after \( t_m \) iterations, \( D = s_{\max} - s_{\min} \) is the diameter of the decision space, \( t_m \) is the maximum number of iterations, and \( W = \sum_{i=1}^k \sum_{e \in \pi_i} \frac{w_e}{s_{\min}} \).
	\end{theorem}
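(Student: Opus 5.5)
The plan is to bound the two components of the regret separately and then combine them. Since the maximum norm is dominated by the sum, $R(s^*) \le \max\{R_T(s^*),R_C(s^*)\} \le |R_T(s^*)| + |R_C(s^*)|$, it suffices to prove
\[
R_T(s^*) \le \sqrt{d\, t_m \beta_{t_m}\gamma_{t_m}}
\quad\text{and}\quad
R_C(s^*) \le W D\sqrt{t_m}.
\]

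\textbf{Runtime term.} I would follow the standard GP-UCB/LCB analysis of Srinivas et al.\ (the reference behind the LCB acquisition function). First assume $T$ is a sample from the GP prior, or lies in its RKHS. With $\beta_t = 2\log(\pi^2 t^2/6\delta)$, a union bound over iterations with $\sum_t 1/t^2=\pi^2/6$ gives, with probability at least $1-\delta$, the confidence band
\[
|T(s)-\mu_{t-1}(s)| \le \beta_t^{1/2}\sigma_{t-1}(s)
\]
at all queried and comparator points. On this event the instantaneous regret satisfies
\[
T(s_t)-T(s^*) \le \mu_{t-1}(s_t)+\beta_t^{1/2}\sigma_{t-1}(s_t) - \bigl(\mu_{t-1}(s^*)-\beta_t^{1/2}\sigma_{t-1}(s^*)\bigr).
\]
If $s_t$ minimizes the LCB, this is at most $2\beta_t^{1/2}\sigma_{t-1}(s_t)$. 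Then Cauchy--Schwarz, monotonicity of $\beta_t$, and the information-gain identity $\sum_t \sigma_{t-1}^2(s_t)\le C_1\gamma_{t_m}$ yield
\[
R_T \le \sqrt{C_1 t_m\beta_{t_m}\gamma_{t_m}}.
\]
The constant $C_1$, together with $d=1$, is absorbed into the stated $\sqrt{d t_m\beta_{t_m}\gamma_{t_m}}$ after suitable normalization of the kernel and noise.

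\textbf{Discretization term.} $C$ is deterministic, so the point is a Lipschitz bound. For each edge, $\bigl|w_e-\mathrm{round}(w_e/s)\,s\bigr|$ is piecewise linear in $s$ with slope magnitude $\mathrm{round}(w_e/s)\le w_e/s_{\min}$ (up to the rounding offset, which I would absorb or handle by a one-sided argument). Summing over the edges of the paths gives a Lipschitz constant $W$ on each continuity piece, so $|C(s_t)-C(s^*)|\le W|s_t-s^*|\le WD$. This alone gives only a linear bound $WDt_m$. To get $\sqrt{t_m}$, I would argue that the NSGA-II/score selection drives $s_t$ toward the Pareto set, so that $|s_t-s^*|$ decays like $D/\sqrt{t}$, for example by tying the selection of $s_t$ to the same posterior-variance shrinkage. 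Summing $D/\sqrt t$ gives $O(D\sqrt{t_m})$, hence $R_C\le WD\sqrt{t_m}$ up to constants.

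\textbf{Main obstacle.} I expect two difficulties. The larger one is this $\sqrt{t_m}$ rate for $R_C$: the plain Lipschitz estimate is only linear, and the jumps of $C$ at rounding thresholds break global Lipschitzness. The second is justifying that $s_t$ behaves like an LCB minimizer even though it is chosen from a Pareto front via $\widetilde F$. For the latter, I would first try the assumption that the selected candidate minimizes $\mathrm{LCB}+C$, and compare against $s^*$ on the same combined score. That yields a joint inequality $\bigl(T(s_t)-T(s^*)\bigr)+\bigl(C(s_t)-C(s^*)\bigr)\le 2\beta_t^{1/2}\sigma_{t-1}(s_t)$. Because $s^*$ is Pareto optimal, at least one of the two differences is nonnegative, which is what allows bounding the maximum norm.
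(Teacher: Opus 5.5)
\textbf{The gap is the $\sqrt{t_m}$ rate for $R_C$, which you flagged yourself; neither your repairs nor the paper's proof closes it, and it cannot hold for arbitrary $s^*$.} Your runtime argument and your Lipschitz estimate for $C$ are the paper's Steps 1 and 2.

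The paper gets past the $R_C$ step only by asserting $\sum_{t=1}^{t_m}|s_t-s^*|\le D\sqrt{t_m}$, citing a convex-optimization text. That claim fails as soon as $S^*$ contains two points $s^a\neq s^b$, because $|s_t-s^a|+|s_t-s^b|\ge|s^a-s^b|$ for every $t$. Your two proposed justifications do not supply it either. Saying NSGA-II drives $s_t$ onto the front does not pin down which Pareto point it approaches. A small $\sigma_{t-1}(s_t)$ says nothing about where $s_t$ lies relative to a prescribed $s^*$.

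No argument can give the bound for arbitrary $s^*\in S^*$. Let $s_T$ and $s_C$ be Pareto-optimal minimizers of $T$ and of $C$, respectively. Suppose no point minimizes both. By continuity of $T$ and $C$ on the compact interval, there is then $\epsilon>0$ with $\bigl(T(s)-\min T\bigr)+\bigl(C(s)-\min C\bigr)\ge\epsilon$ for every $s$. So for \emph{any} sequence $(s_t)$ you get $R(s_T)+R(s_C)\ge R_T(s_T)+R_C(s_C)\ge\epsilon\, t_m$. One of the two regrets therefore grows linearly, contradicting the claimed $O(\sqrt{t_m\log t_m})$ bound. What is missing is not a lemma but a different notion of regret, either scalarized or Pareto/hypervolume-type.

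Your fallback is the part that can be saved. If $s_t$ minimizes $\mathrm{LCB}_t+C$ over a set containing $s^*$, then $\bigl(T(s_t)-T(s^*)\bigr)+\bigl(C(s_t)-C(s^*)\bigr)\le 2\beta_t^{1/2}\sigma_{t-1}(s_t)$. That gives a valid sublinear bound on the scalarized regret $R_T+R_C$.

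The last step of the fallback fails, however. Pareto optimality of $s^*$ only says that in each round one of the two differences is nonnegative, and this does not control $\max\{R_T,R_C\}$. Differences $(+a,-a)$ in every round satisfy both the joint inequality and the Pareto condition, yet $R_T=a\,t_m$. Under that selection rule your separate Step 1 bound is also lost, since $s_t$ need not minimize $\mathrm{LCB}_t$ alone.

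Smaller points:
\begin{itemize}
\item $C$ has no jumps for fixed paths. The term $|w_e-\mathrm{round}(w_e/s)\,s|=s\cdot\mathrm{dist}(w_e/s,\mathbb{Z})$ is continuous, since both candidate errors equal $s/2$ at a rounding threshold. Its slope, however, is bounded by $w_e/s_{\min}+\tfrac12$, not $w_e/s_{\min}$.
\item Dropping $C_1=8/\log(1+\sigma^{-2})$, as the paper also does, requires a noise level small enough that $C_1\le 1$. Kernel normalization alone does not remove it.
\item Covering both $s_t$ and $s^*$ in each round costs a factor $2$ inside the logarithm of $\beta_t$.
\end{itemize}
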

	
	\begin{proof}
		The regret bound is derived by separately bounding \( R_T(s^*) \) and \( R_C(s^*) \), then combining them to obtain the total regret under the maximum norm.
		
		Step 1: Bounding \( R_T(s^*) \). The runtime objective \( T(s) \) is modeled as a Gaussian process (GP). We use the lower confidence bound (LCB) acquisition function:
		\[
		LCB_t(s) = \mu_{t-1}(s) - \sqrt{2 \log K_t} \cdot \sigma_{t-1}(s),
		\]
		where \( \mu_{t-1}(s) \) and \( \sigma_{t-1}(s) \) denote the posterior mean and standard deviation of the GP at iteration \( t-1 \), and \( K_t = \frac{\pi^2 t^2}{6 \delta} \). According to \cite{c27}, with probability at least \( 1 - \delta \):
		\[
		|T(s) - \mu_{t-1}(s)| \leq \sqrt{2 \log K_t} \cdot \sigma_{t-1}(s).
		\]
		Assuming \( s_t \) is selected to minimize \( LCB_t(s) \) within the Pareto set \( S_t \), either \( LCB_t(s_t) \leq LCB_t(s^*) \) or \( s_t = s^* \). By \cite{c28}, this implies:$LCB_t(s_t) \leq LCB_t(s^*) \leq T(s^*).$
		Thus:
		\begin{align*}
			T(s_t) - T(s^*) &\leq T(s_t) - LCB_t(s_t) \\
			&\leq T(s_t) - \mu_{t-1}(s_t) + \sqrt{2 \log K_t} \sigma_{t-1}(s_t), \\
			&\leq 2 \sqrt{2 \log K_t} \sigma_{t-1}(s_t),
		\end{align*}
		Summing over \( t_m \) iterations and applying results from \cite{c27}, the cumulative regret for \( T(s) \) is:$
		R_T(s^*) \leq \sqrt{C_1 d t_m \beta_{t_m} \gamma_{t_m}},$
		where \( d = 1 \), \(C_1 = 8/log(1 + \sigma^{-2})\), \( \beta_{t_m} = 2 \log\left(\frac{\pi^2 t_m^2}{6 \delta}\right) \), and \( \gamma_{t_m} \) is the maximum information gain.
		
		Step 2: Bounding \( R_C(s^*) \). The error objective \( C(s) = \sum_{i=1}^k \sum_{e \in \pi_i} \left| w_e - \operatorname{round}\left( \frac{w_e}{s} \right) \cdot s \right| \) is deterministic. We establish its Lipschitz continuity. For a single term \( c_e(s) = \left| w_e - \operatorname{round}\left( \frac{w_e}{s} \right) \cdot s \right| \), within an interval where \( \operatorname{round}\left( \frac{w_e}{s} \right) = m \), we have:$
		c_e(s) = \left| w_e - m \cdot s \right|.$
		The derivative is \( \pm m \), and since \( m \leq \frac{w_e}{s} \), for \( s \geq s_{\min} \), the slope is bounded by \( \frac{w_e}{s_{\min}} \). The Lipschitz constant for \( C(s) \) over all edges is:$
		L \leq \sum_{i=1}^k \sum_{e \in \pi_i} \frac{w_e}{s_{\min}}.$
		The cumulative regret for \( C(s) \) is:
		\begin{align*}
			R_C(s^*) &= \sum_{t=1}^{t_m} (C(s_t) - C(s^*)) \\
			&\leq \sum_{t=1}^{t_m} |C(s_t) - C(s^*)| \\
			&\leq L \cdot \sum_{t=1}^{t_m} |s_t - s^*|.
		\end{align*}
		Since \( s_t, s^* \in [s_{\min}, s_{\max}] \), \( |s_t - s^*| \leq D \), where \( D = s_{\max} - s_{\min} \). In Bayesian optimization, the sum of distances grows sublinearly:$
		\sum_{t=1}^{t_m} |s_t - s^*| \leq D \sqrt{t_m}$ \cite{bubeck2015convex}.
		Thus:
		\[
		R_C(s^*) \leq \left( \sum_{i=1}^k \sum_{e \in \pi_i} \frac{w_e}{s_{\min}} \right) D \sqrt{t_m}.
		\]
		
		Step 3: Total Regret Bound. The total regret \( R(s^*) = \| [R_T(s^*), R_C(s^*)] \|_\infty \) is bounded by:
		\[
		R(s^*) \leq \max\left( \sqrt{d t_m \beta_{t_m} \gamma_{t_m}}, W D \sqrt{t_m} \right).
		\]
		A conservative bound is:
		\[
		R(s^*) \leq \sqrt{d t_m \beta_{t_m} \gamma_{t_m}} + W D \sqrt{t_m}.
		\]
		Both terms scale as \( O(\sqrt{t_m \log t_m}) \), ensuring sublinear regret.
		
	\end{proof}
	
	Theorem \ref{thm:regret_bound} guarantees that as the number of iterations increases, the regret converges to zero at a sublinear rate. This property is critical for optimization problems, as it indicates that BOGD becomes increasingly efficient and effective over time \cite{c29}. In graph design applications, where numerous evaluations may be required to optimize edge weights, the sublinear regret bound highlights the scalability and robustness of BOGD, making it a valuable tool for real-world graph optimization scenarios \cite{c30}.
	
	\subsection{Theoretical Analysis of CBS-NIC}
	
	We analyze the optimality and completeness of CBS-NIC for solving the MAPF$_Z$ problem on a weighted graph $G=(V,E,W)$ with positive integer edge weights. Time is discretized into integer timesteps. The objective is to minimize the makespan
	$cost(\Pi) = \max_i c(\pi_i),$
	where $c(\pi_i)$ is the arrival time of agent $i$ at its goal. The low-level planner SIPP is complete and optimal under a given set of constraints\cite{c26}.
	
	\begin{theorem}[Optimality and Completeness of CBS-NIC]
		If a conflict-free solution exists, CBS-NIC will find one and it will be a minimum-makespan solution.
	\end{theorem}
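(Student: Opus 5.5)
The argument follows the standard CBS proof pattern and rests on three properties: the constraint splitting is \emph{sound}, every CT node's cost is a \emph{lower bound}, and the search is \emph{finite}.

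\textbf{Soundness of splitting.} For every conflict detected by Algorithm~\ref{alg2}, every conflict-free solution satisfies the constraints of at least one child CT node.
\begin{itemize}
\item \emph{Vertex conflicts} $\langle i,j,v,t\rangle$, non-disjoint split: a valid solution cannot have both agents at $v$ at time $t$. It therefore satisfies $\overline{\langle i,v,t\rangle}$ or $\overline{\langle j,v,t\rangle}$.
\item \emph{Vertex conflicts}, disjoint split: every solution either has agent $i$ at $v$ at $t$ or does not. The positive and negative children are exactly complementary, and by Definition~\ref{def3} the positive child only excludes solutions that would already conflict.
\item \emph{Edge conflicts} $\langle i,j,e_i^{t_i},e_j^{t_j}\rangle$: take a conflict-free solution and suppose it violates both $\overline{\langle i,e_i^{t_i},(t_j,t_j^e)\rangle}$ and $\overline{\langle j,e_j^{t_j},(t_i,t_i^e)\rangle}$. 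Then agent $i$ uses the edge during $(t_j,t_j^e)$ and agent $j$ uses the reverse edge during $(t_i,t_i^e)$. Using the integrality of times and the overlap condition of Definition~\ref{def2}, I must derive an opposite-direction overlap, contradicting conflict-freeness.
\end{itemize}
By induction from the root, whose constraint set is empty, every valid solution therefore remains consistent with at least one leaf of OPEN at all times.

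\textbf{Lower bound and optimality.} Each CT node's paths are computed by SIPP, which is optimal under the node's constraints. So $N.\mathrm{cost}$ is the minimum makespan over all plans, conflicting or not, that satisfy $N$'s constraints. Constraints accumulate down the tree, so costs are monotone non-decreasing along CT branches. Let $\Pi^*$ be an optimal solution with makespan $C^*$. By the invariant, OPEN always contains a node $N^*$ consistent with $\Pi^*$, and $N^*.\mathrm{cost}\le C^*$. Best-first selection then pops any node of cost $\le C^*$ before any node of larger cost. Hence the first conflict-free node returned (lines~\ref{line9}--\ref{line10}) has cost $\le C^*$, and it is optimal.

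\textbf{Completeness (termination).} This is the step I expect to be the main obstacle. Makespan costs are integers, because edge weights and times are integers. The number of CT nodes with cost $\le C^*$ must be shown finite. Every expansion adds a new constraint that the node's current plan violates. Every such constraint with cost $\le C^*$ involves a vertex or edge of the finite graph, one of $k$ agents, and a time in $[0,C^*]$. So there are finitely many distinct constraints, and a branch cannot keep adding constraints forever without exceeding $C^*$. Note that SIPP's safe intervals keep the relevant time horizon bounded by the cost.

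A subtlety is makespan versus sum of costs. Some agents' paths may be unconstrained in time after the makespan. I would handle this by restricting attention to times $\le C^*$ and noting that SIPP returns finite-length plans. Combining finitely many nodes of cost $\le C^*$ with the soundness invariant, the search must reach a conflict-free node of cost $\le C^*$ in finite time, which proves the theorem.
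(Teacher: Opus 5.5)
Your outline (sound splitting, node costs as lower bounds, finitely many CT nodes of cost $\le C^*$) is the same as the paper's, and your lower-bound and best-first parts are fine. The gap is the edge-conflict soundness step you leave open: it cannot be discharged, because it is false for the constraints CBS-NIC actually adds.

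\textbf{Counterexample to edge-conflict soundness.} Take an edge $\{u,w\}$ with $W=10$. Consider a CT node in which $i$ traverses $u\to w$ over $[0,10)$ and $j$ traverses $w\to u$ over $[5,15)$. This is an edge conflict since $0\le 5<10$, and the children receive $\overline{\langle i,(u,w),(5,15)\rangle}$ and $\overline{\langle j,(w,u),(0,10)\rangle}$. Now consider a plan in which $j$ traverses $w\to u$ over $[1,11)$ and leaves $u$ at once, while $i$ reaches $u$ at time $12$ from another neighbour and traverses $u\to w$ over $[12,22)$.
\begin{itemize}
\item The two traversals are disjoint in time, so Definition~\ref{def2} reports nothing, and there is no vertex conflict.
\item Yet $i$ starts at $12\in(5,15)$ and is on the edge at times $13,14$, while $j$ starts at $1\in(0,10)$.
\item So this conflict-free plan violates \emph{both} constraints. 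This holds under either reading of an edge constraint: forbidden start times, or forbidden occupancy.
\end{itemize}
The reason is that each window is two-sided around the constrained agent's own action: $i$ is barred from everything that would overlap $j$'s \emph{current} traversal, and vice versa. Shifting $j$ earlier and $i$ later escapes the conflict but not the constraints, and integrality together with the overlap condition cannot repair this. Hence your invariant that every valid solution stays consistent with some leaf of OPEN does not follow, and neither optimality nor completeness does. The paper's proof has the same skeleton and merely asserts this soundness (``at least one agent must avoid the resource during the conflicting time interval''), so there is no argument to borrow there.

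\textbf{Related hole in termination.} Under the start-time reading used in the SIPP subsection, $i$'s window $(t_j,t_j^e)$ does not contain $t_i$ whenever $t_i\le t_j$, as in the example above. That child then need not change $i$'s plan, so the same conflict and the same constraint recur. Your claim that every expansion adds a new constraint violated by the current plan therefore fails.

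\textbf{How to repair it.} To make your proof go through you must change the edge constraints. Two options work:
\begin{itemize}
\item Forbid $i$ from starting $u\to w$ at exactly $t_i$ in one child, and $j$ from starting $w\to u$ at exactly $t_j$ in the other. This is sound because these two timed actions conflict, and each constraint excludes the current plan.
\item Use one-sided start-time windows $[t_i,t_j+W)$ for $i$ and $[t_j,t_i+W)$ for $j$. Any violating start times $s,r$ then satisfy $|s-r|<W$, and therefore conflict.
\end{itemize}
With such constraints the rest of your argument is correct: SIPP cost as a per-node lower bound on makespan, monotonicity, and finitely many constraints with times $\le C^*$. Your $N^*.\mathrm{cost}\le C^*$ is also the right form of the paper's claim $cost(N^*)=M^*$.
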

	
	\begin{proof}
		Optimality: Let \( \Pi^* \) be an optimal solution with makespan \( M^* \). Conflicts are detected and resolved by branching with interval-based constraints that forbid one conflicting action. As a result, there exists a branch of the conflict tree whose accumulated constraints are consistent with \( \Pi^* \). Since the low-level SIPP planner computes shortest paths under the imposed constraints, the corresponding CT node \( N^* \) reconstructs \( \Pi^* \) and satisfies \( cost(N^*) = M^* \).
		The high-level search expands CT nodes in non-decreasing order of \( cost(N) \) using best-first search. This implies that all nodes with a cost less than or equal to \( M^* \) will be expanded before any node with cost greater than \( M^* \). Furthermore, adding constraints cannot decrease the makespan, because the constrained agent’s feasible path set is reduced, and SIPP always returns the shortest feasible path. Therefore, \( cost(N') \ge cost(N) \) for every child node \( N' \) of \( N \).
		Now suppose the algorithm returns a conflict-free solution with cost \( M > M^* \). In this case, the node \( N^* \) with cost \( M^* \) must either still be in OPEN or have been expanded earlier. Since \( M^* < M \), best-first search would have expanded \( N^* \) before any node of cost \( M \), leading to a contradiction. Thus, CBS-NIC will first expand the optimal conflict-free node \( N^* \) with makespan \( M^* \), ensuring that it is optimal with respect to makespan.
		
		Completeness: 
		CBS-NIC is complete because it systematically explores the CT using best-first search and resolves conflicts with sound pairwise branching constraints. When two agents conflict at a vertex (or edge) during a specific time interval, the algorithm branches by adding mutually exclusive constraints: one forbidding the first agent from using the resource in that interval, and the other forbidding the second agent. This branching is sound because, in any valid conflict-free solution, at least one agent must avoid the resource during the conflicting time interval. Thus, each valid solution is preserved in at least one of the child nodes and never eliminated from both branches.
		Since the graph, the number of agents, and the discrete time horizon up to \( M^* \) are finite, the total number of possible interval-based constraints is also finite. This implies that the portion of the CT with nodes having cost \( \leq M^* \) is finite. The high-level search expands nodes in non-decreasing cost order, ensuring that every branch that could contain a valid solution is preserved due to the soundness of the branching constraints.
		If a conflict-free solution exists, a CT node that satisfies the constraints of that solution, with individual paths (computed by SIPP) realizing it without conflicts, will eventually be expanded. When this node is conflict-free, it is returned as a valid solution. Therefore, CBS-NIC is guaranteed to find a valid solution whenever one exists.
		
	\end{proof}

	\section{Experiments}\label{sec5}
	In this section, we designed evaluation experiments to assess the effectiveness of the proposed algorithm in different scenarios. The simulator was developed in C++, and the experiments were run on a laptop with Intel Core i7 2.10 GHz CPU and 16GB RAM.
	
	\subsection{Experimental setup}
	In our experiments, we utilized graphs from the recently introduced MAPF benchmark suite, available in the Moving AI repository \cite{c32}. This benchmark includes two types of graphs: $2^k$-neighborhood grids and roadmap-based graphs. These graphs cover a wide range of scenarios, including video game maps (e.g., den520d), warehouse logistics (warehouse-20-40-10-2-2), randomly generated environments (random-64-64-10), field robotics (empty-16-16), and urban maps (Berlin-1-256, Boston-0-256).
	To evaluate performance on graphs with non-unit edge costs, we applied the following procedure: For each grid map, we created a corresponding graph where each vertex represents a grid cell, and edges were added between each cell and its $2^k$ nearest neighbors, with $k$ ranging from 3 to 5. 
	For the roadmap-based environments, we employ Constrained Delaunay Triangulation (CDT) \cite{c34} to construct roadmaps in three representative map environments: Berlin-1-256, random-64-64-10, and den520d. As shown in Fig.~\ref{fig4}, each roadmap differs in terms of the number of nodes and edges, reflecting the diversity of spatial layouts and obstacle distributions across the selected maps. 
	By generating roadmaps using CDT with integrated obstacle constraints and inflation techniques, we construct consistent yet topologically diverse testing environments. This enables a comprehensive evaluation of the scalability and adaptability of our approach across varying levels of environmental complexity.
	For each graph, 25 scenario files were used, each containing randomly selected, non-overlapping start-goal pairs sampled from the graph vertices. To ensure a fair and consistent comparison, all evaluated methods are tested on the identical set of task instances for each scenario.
	\begin{figure}
		\centering
		\includegraphics[width=3.5in]{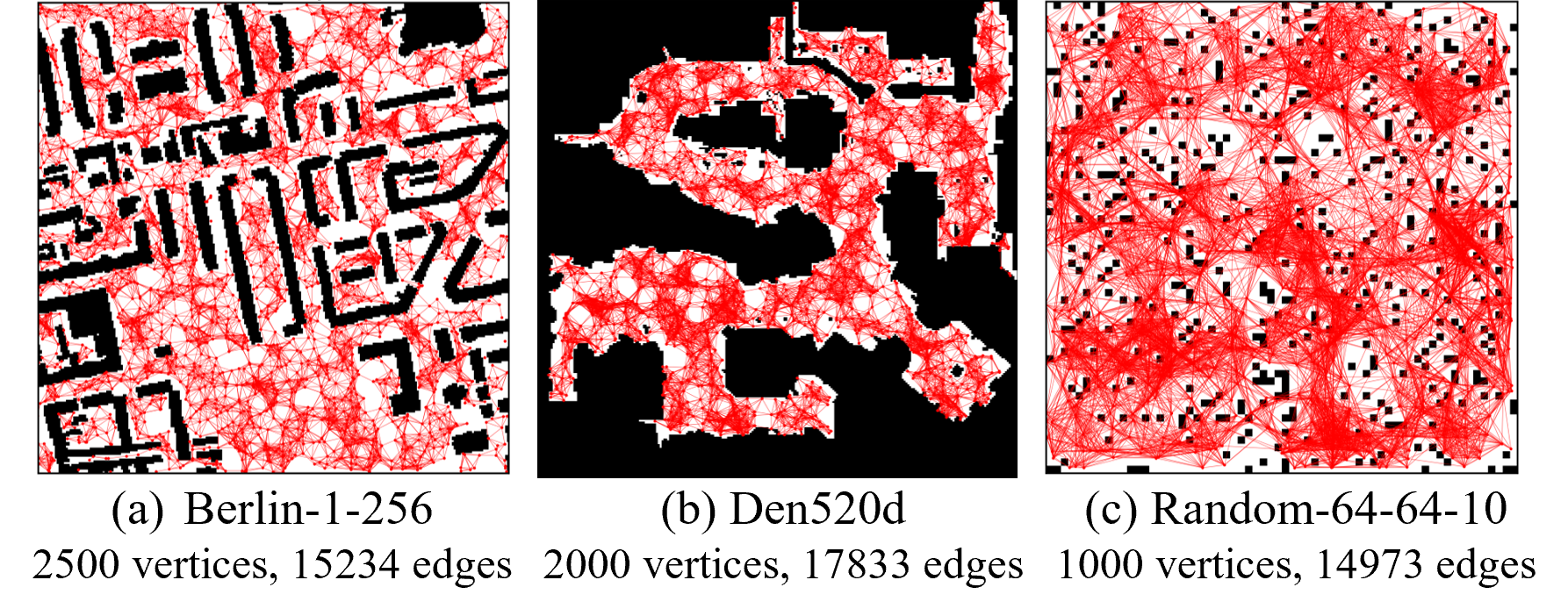}
		\caption{The roadmap visualization includes varying numbers of vertices and edges.}
		\label{fig4}
	\end{figure}
	
	We evaluate and compare the proposed CBS-NIC against several closely related algorithms, including CCBS \cite{c6}, CBS-NIC with BOGD strategy (CBS-NIC-B), Baeline, Meta-agent CBS (MA-CBS) \cite{c23} and Enhanced Partial Expansion A* (EPEA) along with its restart variant EPEA-restart \cite{c22}. The parameters $c_3$, $c_4$, and $c_5$ denote the number of neighbors used in the $2^k$-neighborhood grids, corresponding to $k = 3, 4, 5$, respectively. For the other algorithms, we report results only for $c_5$ to avoid redundancy, as it serves as a representative case that sufficiently reflects relative performance differences across methods.
	It is worth noting that MA-CBS and EPEA (including EPEA-restart) are designed specifically for grid-based MAPF and operate under different neighborhood structures.Therefore, their implementations are limited to different grid neighborhoods rather than roadmap environments.
	In CCBS and its variants, agents are represented as disks with a radius of 0.5.
	The Baseline method adopts a simplified graph construction strategy, where continuous edge costs are directly discretized via integer rounding.

	\subsection{Results}

	\begin{figure}
		\centering
		\includegraphics[width=3.5in]{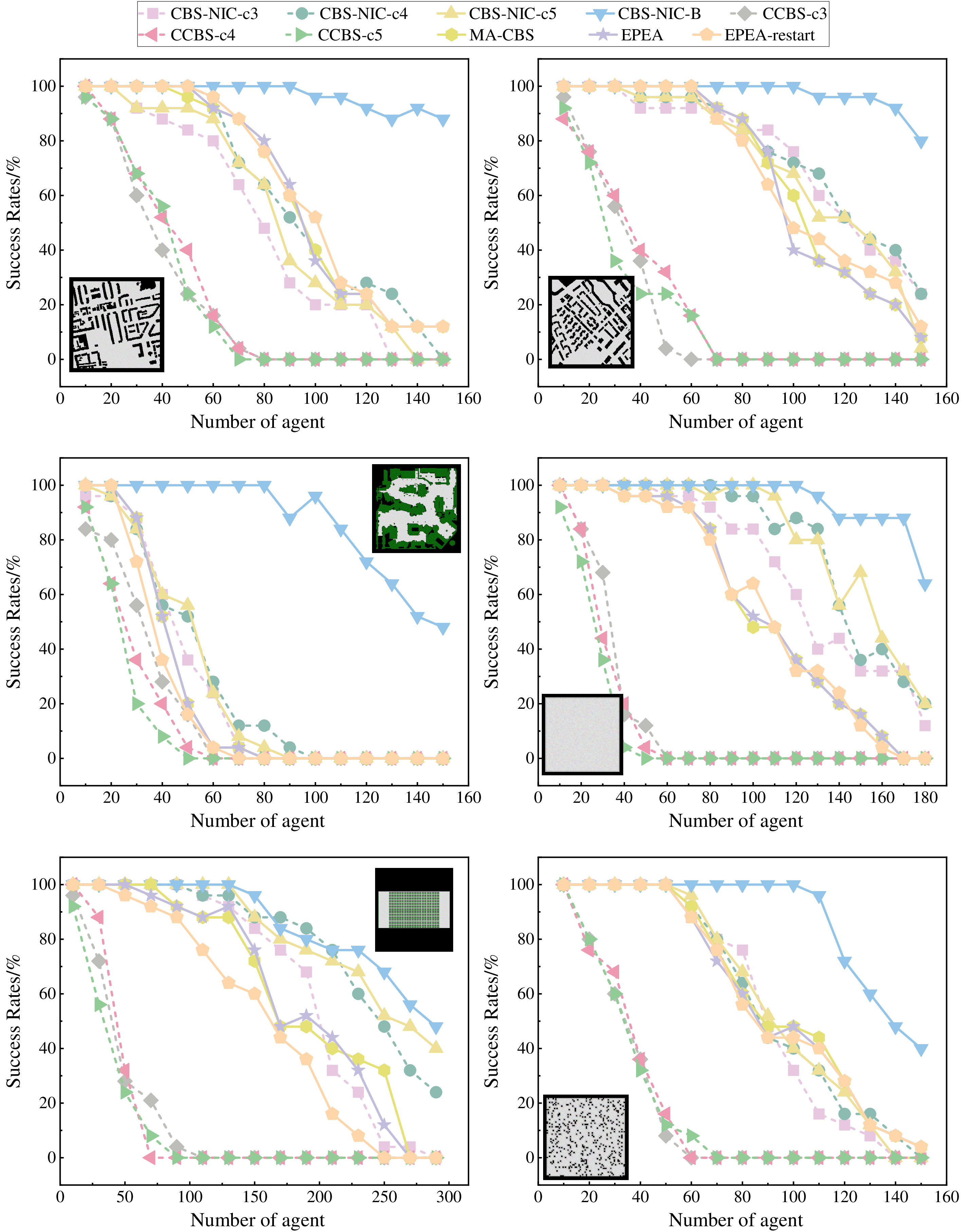}
		\caption{The success rate for different algorithms and their variants on gridmaps.}
		\label{fig5}
	\end{figure}

	\subsubsection{Success rate results}
	We evaluated the success rates of various algorithms within a 30-second time limit, considering different numbers of agents across various graphs.
	Fig. \ref{fig5} illustrates the success rate on gridmaps. 
	Across all tested graphs, CBS-NIC-B consistently achieves a higher success rate than other baseline algorithms. This improvement is attributed to the proposed BOGD strategy for graph discretization, which enables the selection of a more effective discretization parameter $s$, thereby enhancing solution efficiency. In contrast, CCBS suffers from low solution efficiency due to its unbounded continuous state space, limiting its scalability to scenarios with only $10–-30$ agents. Our proposed method, however, successfully scales to over 100 agents while maintaining a 100\% success rate.
	The CBS-NIC variant without the BOGD strategy demonstrates comparable performance to existing methods such as MA-CBS and EPEA. These baselines exhibit limitations: EPEA's partial expansion strategy offers limited benefits in densely constrained environments and may incur overhead that diminishes its scalability; MA-CBS introduces significant computational overhead due to the exponential growth of the joint search space when agents are merged into meta-agents. These factors collectively limit their effectiveness in large-scale or highly constrained scenarios.
	In the warehouse map, CBS-NIC-B shows limited improvement compared to other maps. This is because the many narrow aisles cause numerous neighbor nodes to be blocked by obstacles, reducing the effectiveness of the BOGD strategy.

	\subsubsection{Solution cost results}
	Fig.~\ref{fig6} presents box plots comparing the makespan of different algorithms on various grid maps. Since MA-CBS, EPEA, and its variant EPEA-restart yield nearly identical makespan results, we use MA-CBS as a representative in the figure. Similarly, CBS-NIC and MA-CBS show comparable performance, as both aim to minimize makespan by finding conflict-free paths using an optimal search strategy. In contrast, CBS-NIC-B typically results in a higher makespan than CBS-NIC, primarily due to the discretization of the graph. CCBS operates in continuous space and accounts for agent size, which also leads to a larger makespan compared to discrete-space methods. Notably, on the warehouse map, CBS-NIC-B exhibits a significantly higher makespan than other algorithms. This is attributed to the structural characteristics of the map, specifically the dense distribution of aisles, which imposes stricter constraints on discretized path planning.
	Overall, although CBS-NIC-B incurs a slight sacrifice in makespan optimality, it significantly improves the success rate of solving MAPF$_Z$ instances. For instance, on the Berlin gridmap with 50 agents, an increase of 5.83\% in makespan leads to a substantial 68\% improvement in the success rate.

	\begin{figure}
		\centering
		\includegraphics[width=3.5in]{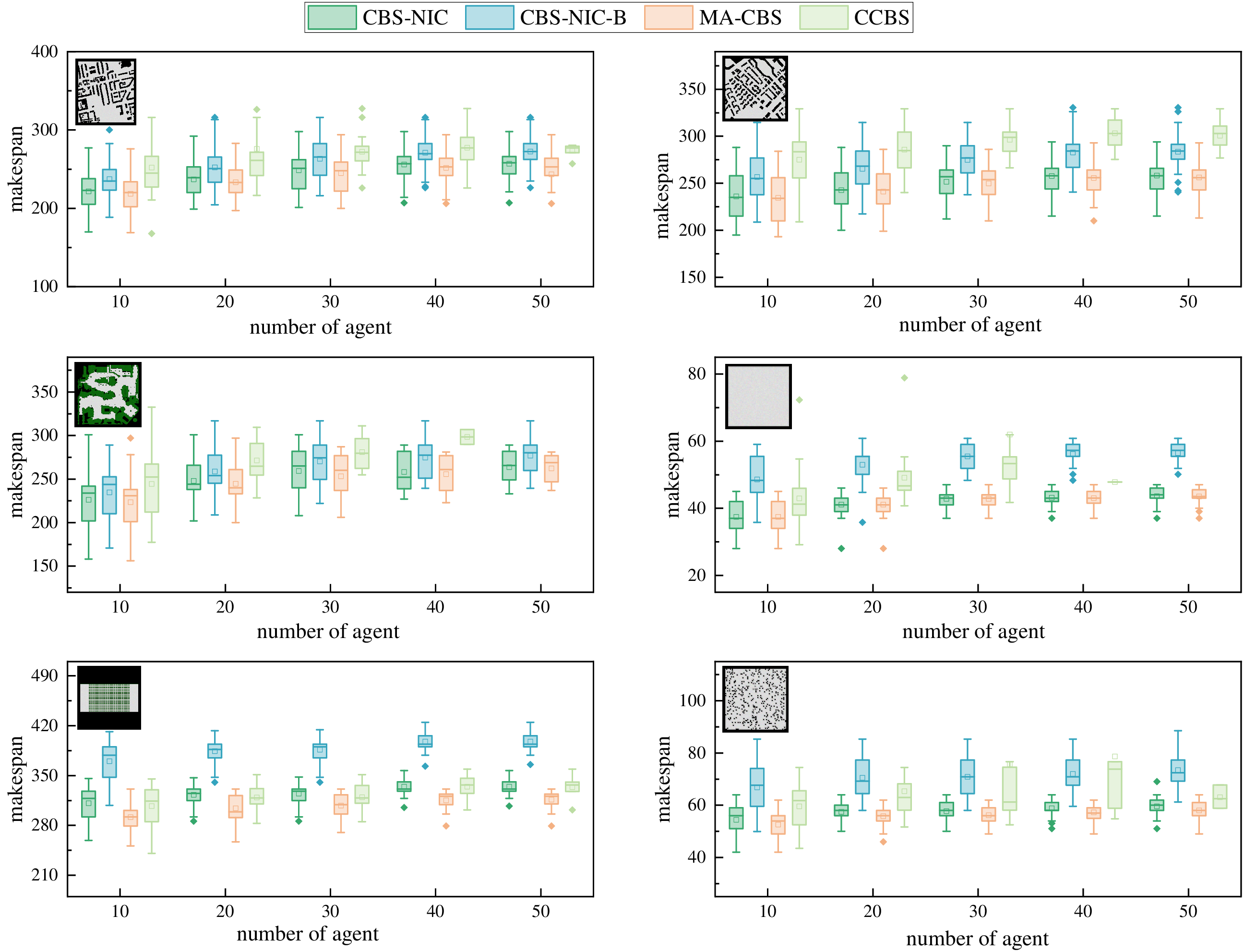}
		\caption{The makespan for different algorithms on gridmaps.}
		\label{fig6}
	\end{figure}
	
	\subsubsection{The runtime results}
	
	\begin{figure}
		\centering
		\includegraphics[width=3.5in]{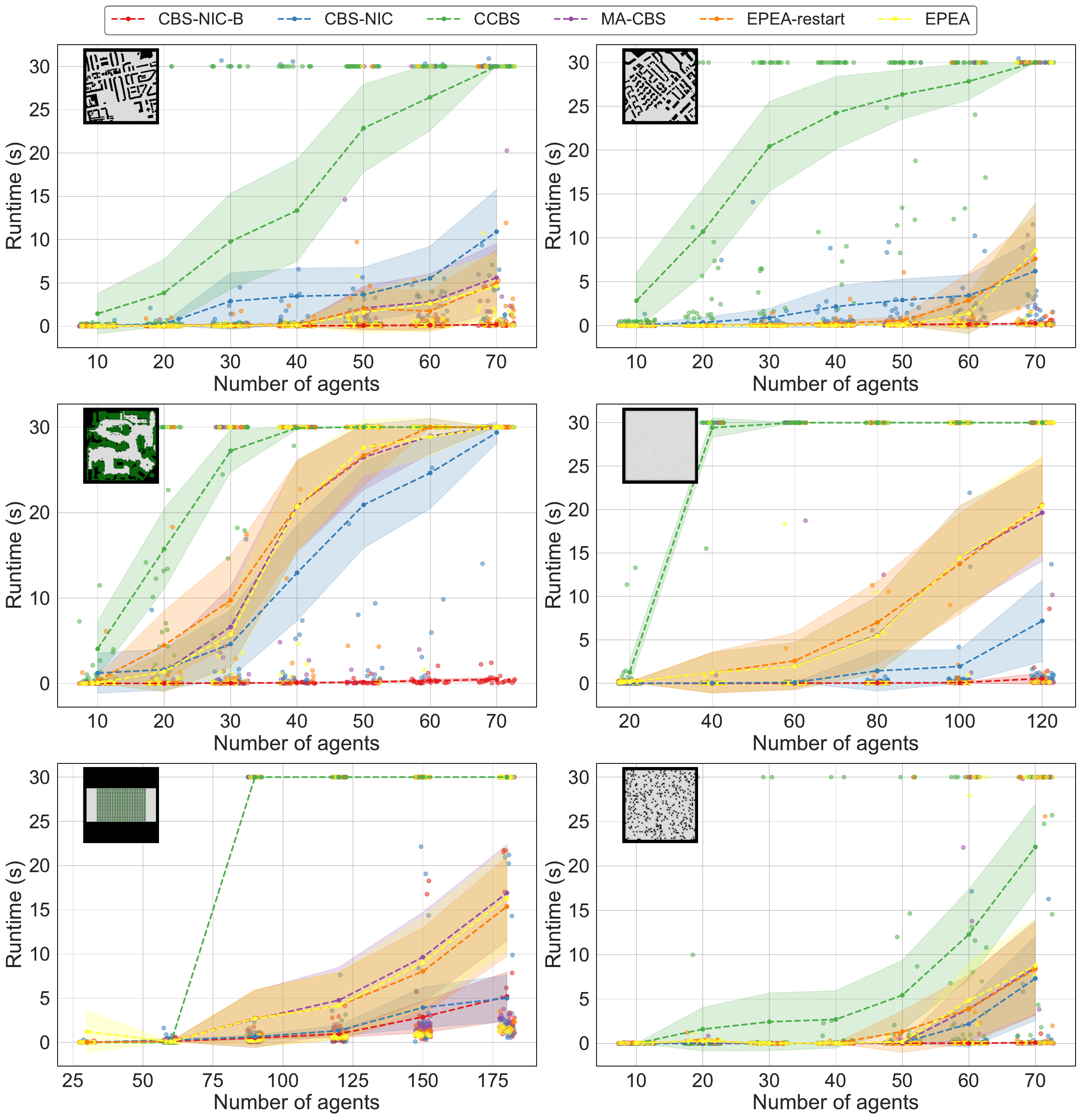}
		\caption{The runtime for different algorithms on gridmaps.}
		\label{fig7}
	\end{figure}
	Fig.~\ref{fig7} illustrates the runtime performance of different algorithms using both line graphs and scatter plots. The line graph represents the average runtime across multiple instances, while the scatter plot shows the distribution and variability of individual runtimes. A maximum runtime threshold of 30 seconds was set for all algorithms. When CCBS consistently reaches this upper limit, it indicates that the algorithm failed to find a conflict-free solution within the allotted time.
	From the figure, it is evident that the proposed CBS-NIC-B achieves the lowest runtime among all evaluated methods. Even as the number of agents exceeds 100, CBS-NIC-B is able to maintain execution times within approximately one second, demonstrating excellent scalability and efficiency. MA-CBS and EPEA follow in terms of performance, with moderate runtimes that grow with problem complexity. In contrast, CCBS incurs the highest runtime, which is primarily due to its operation in an unbounded, continuous state space and the additional overhead introduced by considering agent sizes and continuous-time collision checking. These results highlight the advantages of CBS-NIC-B in terms of both speed and robustness, especially in large-scale multi-agent scenarios.
	
	\begin{table}[htbp]
		\centering
		\renewcommand{\arraystretch}{1.0}
		\setlength{\tabcolsep}{4pt}
		\scriptsize
		\caption{Performance comparison of different methods across three map types and varying numbers of agents. }
		\begin{tabular}{c c c c c c}
			\toprule
			\multirow{2}{*}{\textbf{Map}} & \multirow{2}{*}{\textbf{\#Agents}} 
			& \multicolumn{4}{c}{\textbf{Succ. Rate (\%) / Runtime (s)}} \\
			\cmidrule(lr){3-6}
			& & \textbf{Baseline} & \textbf{CBS-NIC-B} & \textbf{CCBS+DP} & \textbf{CCBS+DP-1.3} \\
			\midrule
			
			\multirow{8}{*}{\textbf{den520d}} 
			& 10 & 100 / 0.698 & 100 / 0.004 & 72 / 3.165 & 72 / 0.011 \\
			& 20 & 100 / 1.588 & 100 / 0.007 & 4 / 19.885 & 20 / 0.023 \\
			& 30 & 72 / 7.865 & 100 / 0.025 & 0 / -- & 0 / -- \\
			& 40 & 28 / 7.749 & 100 / 0.087 & 0 / -- & 0 / -- \\
			& 50 & 4 / 10.125 & 100 / 0.088 & 0 / -- & 0 / -- \\
			& 70 & 0 / -- & 88 / 2.893 & 0 / -- & 0 / -- \\
			& 90 & 0 / -- & 52 / 2.494 & 0 / -- & 0 / -- \\
			& 110 & 0 / -- & 28 / 6.737 & 0 / -- & 0 / -- \\
			\midrule
			
			\multirow{7}{*}{\textbf{Berlin}} 
			& 10 & 100 / 0.011 & 100 / 0.008 & 100 / 0.036 & 100 / 0.053 \\
			& 20 & 100 / 0.115 & 100 / 0.013 & 92 / 5.450 & 88 / 2.494 \\
			& 30 & 100 / 3.338 & 100 / 0.045 & 32 / 8.052 & 56 / 0.664 \\
			& 40 & 76 / 6.954 & 100 / 0.117 & 0 / -- & 0 / 0.630 \\
			& 50 & 36 / 14.678 & 100 / 0.316 & 0 / -- & 0 / 0.238 \\
			& 70 & 0 / -- & 96 / 2.104 & 0 / -- & 0 / -- \\
			& 90 & 0 / -- & 52 / 7.375 & 0 / -- & 0 / -- \\
			\midrule
			
			\multirow{8}{*}{\textbf{random}} 
			& 10 & 100 / 0.002 & 100 / 0.002 & 92 / 0.025 & 84 / 0.020 \\
			& 20 & 100 / 0.006 & 100 / 0.003 & 68 / 2.266 & 56 / 0.579 \\
			& 30 & 100 / 0.054 & 100 / 0.006 & 12 / 5.760 & 16 / 3.911 \\
			& 40 & 100 / 0.357 & 100 / 0.014 & 0 / -- & 4 / 1.545 \\
			& 70 & 64 / 2.699 & 100 / 0.064 & 0 / -- & 0 / -- \\
			& 100 & 0 / 1.097 & 100 / 0.097 & 0 / -- & 0 / -- \\
			& 130 & 0 / 1.321 & 88 / 1.321 & 0 / -- & 0 / -- \\
			& 170 & 0 / 3.328 & 36 / 3.328 & 0 / -- & 0 / -- \\
			
			\bottomrule
		\end{tabular}
		
		{\scriptsize Each cell reports the success rate (\%) and average runtime (s), computed over successful instances only. “--” indicates failure in all cases.}
		\label{tab1}
	\end{table}
	
	\subsubsection{Performance on roadmaps}
	The roadmap evaluation results are presented in Table~\ref{tab1}, which reports both the success rate and runtime performance. Since other baseline algorithms cannot be directly applied to roadmap-based planning, we compare our approach with two variants of CCBS: CCBS with DS and PC (denoted as CCBS+DP), and a suboptimal version with a suboptimality bound of $w=1.3$ (denoted as CCBS+DP-1.3).
	From the results, it is evident that our proposed CBS-NIC and its variant CBS-NIC-B exhibit significantly better scalability compared to CCBS. On the random roadmap, CBS-NIC-B maintains a 100\% success rate even when the number of agents reaches 100, whereas CCBS fails to produce any valid solution beyond 40 agents. Similarly, the runtime of CCBS is several orders of magnitude higher than that of our methods. 
	The suboptimal version of CCBS (CCBS+DP-1.3) achieves improved performance over its optimal counterpart, which can be attributed to its trade-off between solution optimality and computational efficiency. By relaxing the optimality requirement, it gains in runtime and success rate at the cost of solution quality.
	These findings confirm that CBS-NIC-B is highly scalable and efficient even in roadmap-based planning scenarios.
	
	\subsubsection{Ablation Study on BOGD}
	
	To evaluate the effectiveness of BOGD, we conduct an ablation study by comparing it with a Baseline graph construction strategy, as summarized in Table~\ref{tab:graph_design}. Overall, incorporating BOGD consistently improves robustness and computational efficiency across different problem scales.
	From a success rate perspective, CBS-NIC-B maintains near-perfect performance in most settings and remains effective as the number of agents increases, whereas the Baseline degrades significantly and often fails in high-density scenarios. 
	In terms of runtime, CBS-NIC-B consistently achieves significant speedups, often reducing computation time by one to two orders of magnitude, especially in large-scale instances. This suggests that BOGD significantly enhances the solvability of the problem in high-density scenarios.
	
	\begin{table}
		\centering
		\footnotesize
		\setlength{\tabcolsep}{3pt}
		\renewcommand{\arraystretch}{0.6}
		
		\caption{Ablation study of BOGD: comparison between baseline and CBS-NIC-B}
		\label{tab:graph_design}
		
		\begin{tabular}{l c c c c c}
			\toprule
			\multirow{2}{*}{\textbf{Map}} & \multirow{2}{*}{\textbf{\#Agents}} & \multicolumn{2}{c}{\textbf{Baseline}} & \multicolumn{2}{c}{\textbf{CBS-NIC-B}} \\
			\cmidrule(r){3-4} \cmidrule(l){5-6}
			& & \textbf{Succ.(\%)} & \textbf{Runtime(s)} & \textbf{Succ.(\%)} & \textbf{Runtime(s)} \\
			\midrule
			
			\multirow{5}{*}{berlin}
			& 20  & 100 & 0.242 & 100 & 0.011 \\
			& 50  & 92  & 3.62  & 100 & 0.07  \\
			& 80  & 64  & 5.5   & 100 & 0.21  \\
			& 110 & 20  & 5.06  & 96  & 1.18  \\
			& 140 & 0   & --    & 92  & 2.08  \\
			\cmidrule{1-6}
			
			\multirow{5}{*}{boston}
			& 20  & 100 & 0.36  & 100 & 0.011 \\
			& 50  & 96  & 1.78  & 100 & 0.12  \\
			& 80  & 84  & 5.01  & 100 & 0.32  \\
			& 110 & 52  & 9.6   & 96  & 0.74  \\
			& 140 & 32  & 23.08 & 92  & 2.32  \\
			\cmidrule{1-6}
			
			\multirow{5}{*}{den520d}
			& 20  & 96  & 1.17  & 100 & 0.014 \\
			& 30  & 84  & 3.05  & 100 & 0.054 \\
			& 40  & 60  & 6.57  & 100 & 0.084 \\
			& 50  & 56  & 9.44  & 100 & 0.16  \\
			& 60  & 24  & 13.46 & 100 & 0.33  \\
			\cmidrule{1-6}
			
			\multirow{5}{*}{empty}
			& 40  & 100 & 0.039  & 100 & 0.0063 \\
			& 80  & 96  & 0.25   & 100 & 0.046  \\
			& 120 & 80  & 1.48   & 100 & 0.57   \\
			& 150 & 68  & 3.74   & 88  & 1.83   \\
			& 180 & 20  & 11.04  & 64  & 1.91   \\
			\cmidrule{1-6}
			
			\multirow{5}{*}{random}
			& 20  & 100 & 0.008  & 100 & 0.003 \\
			& 50  & 100 & 0.094  & 100 & 0.017 \\
			& 80  & 68  & 0.81   & 100 & 0.11  \\
			& 110 & 32  & 1.18   & 96  & 1.62  \\
			& 140 & 0   & --     & 48  & 6.3   \\
			
			\bottomrule
		\end{tabular}
	\end{table}

	\section{CONCLUSIONS}\label{sec6}
	In this paper, we introduced MAPF$_Z$, a novel variant of the multi-agent pathfinding problem that accommodates non-unit integer costs, thereby providing a more realistic modeling framework for scenarios with heterogeneous edge weights. To solve MAPF$_Z$ efficiently, we proposed CBS-NIC-B, which achieves notable performance gains by integrating time-interval-based conflict detection, refined constraint formulation, an enhanced SIPP algorithm, and the BOGD framework. 
	Experimental results demonstrate that CBS-NIC-B significantly improves success rates while maintaining competitive solution quality. 
	In future work, we plan to extend our algorithm to more challenging scenarios, such as large-scale environments. Furthermore, we aim to incorporate spatio-temporal planning considerations to better reflect real-time execution constraints and enhance the practical applicability of our approach.

	\section*{ACKNOWLEDGMENT}
	This work was supported by the National Natural Science Foundation of China under Grant No. 62373139,  Young Talents sponsorship Project of Hunan under Grant No.2023RC3121, Hunan University-China Mobile Communications Group Co.,Ltd. Industrial Intelligent Joint Institute, and China Mobile Limited.


	\bibliography{document.bib} 
	\bibliographystyle{IEEEtran} 
	
	\vspace{-16 mm}
	\begin{IEEEbiography}[{\includegraphics[width=1in,height=1.2in,clip,keepaspectratio]{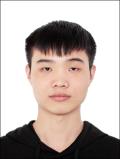}}]{Hongkai Fan}  received the B.S. degree in information engineering from the Hunan University of Technology, Zhuzhou, China, in 2021. He is currently pursuing the Ph.D. degree in electronic and information engineering with the College of Electrical and Information Engineering, Hunan University,Changsha, China. His research interests include the multi-agent systems and industrial manufacturing.
	\end{IEEEbiography}
	
	\vspace{-5mm}
	
	\begin{IEEEbiography}[{\includegraphics[width=1in,height=1.2in,clip,keepaspectratio]{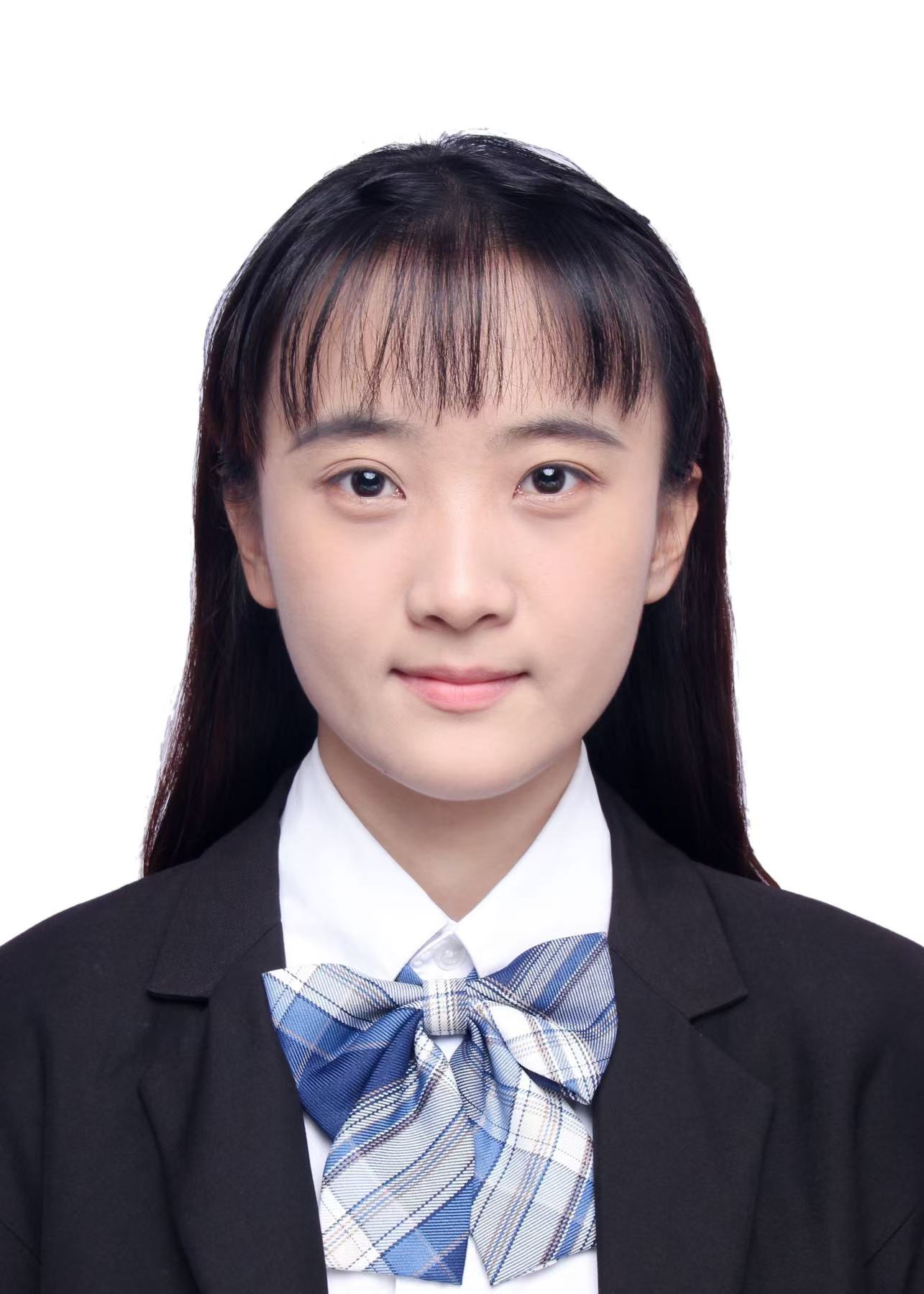}}]{Qinjing Xie} received her B.S. degree from Xiangtan University, Xiangtan, China, in 2023. She is currently pursuing her M.S. degree in Electronic and Information Engineering at the College of Electrical and Information Engineering, Hunan University, Changsha, China. Her research interests include multi-robot path graph optimization and layout optimization.
	\end{IEEEbiography}
	
	\vspace{-12mm}
	
	\begin{IEEEbiography}[{\includegraphics[width=1in,height=1.2in,clip,keepaspectratio]{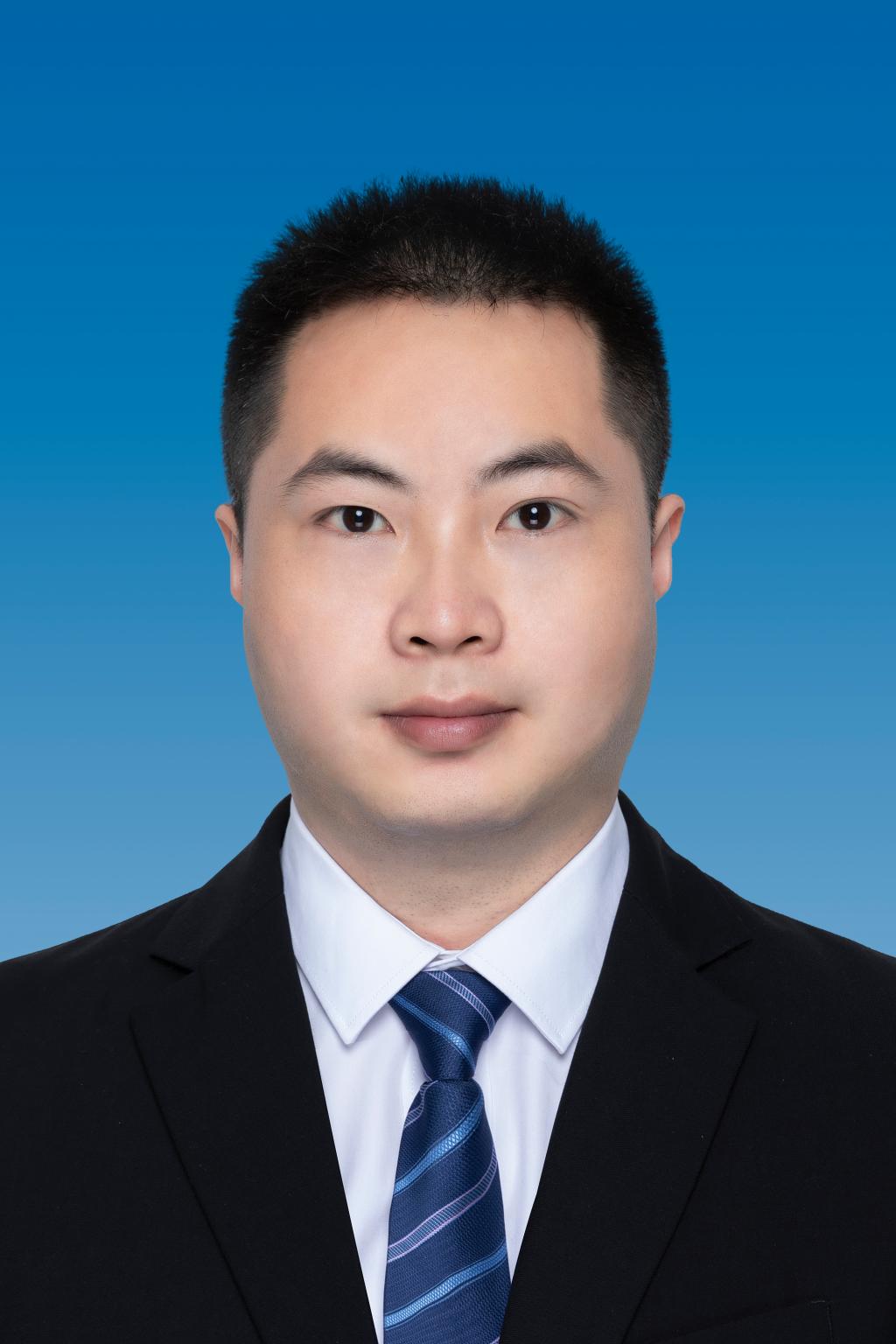}}]{Bo Ouyang} received the B.Eng. degree in electronic and information engineering from Zhejiang University China, in 2009 and the Ph.D. degree in circuits and systems from Zhejiang University, China in 2014. He is now an Associate Professor with the School of Electrical and Information Engineering, Hunan University, Changsha, China. His research interest includes the multi-agent system and complex networks.
	\end{IEEEbiography}
	
	\vspace{-12mm}
	
	\begin{IEEEbiography}[{\includegraphics[width=1in,height=1.2in,clip,keepaspectratio]{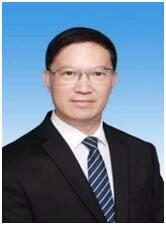}}]{Yaonan Wang} received the B.S. degree in computer engineering from East China Science and Technology University, Fuzhou, China, in 1981, and the M.S. and Ph.D. degrees in electrical
		engineering from Hunan University, Changsha, China, in 1990 and 1994, respectively.
		Since 1995, he has been a Professor with Hunan University. His research interests include
		robotics, intelligent perception and control, and computer vision for industrial applications.
		Prof. Wang is an Academician of the Chinese Academy of Engineering.
	\end{IEEEbiography}
	
	\vspace{-12mm}
	
	\begin{IEEEbiography}[{\includegraphics[width=1in,height=1.2in,clip,keepaspectratio]{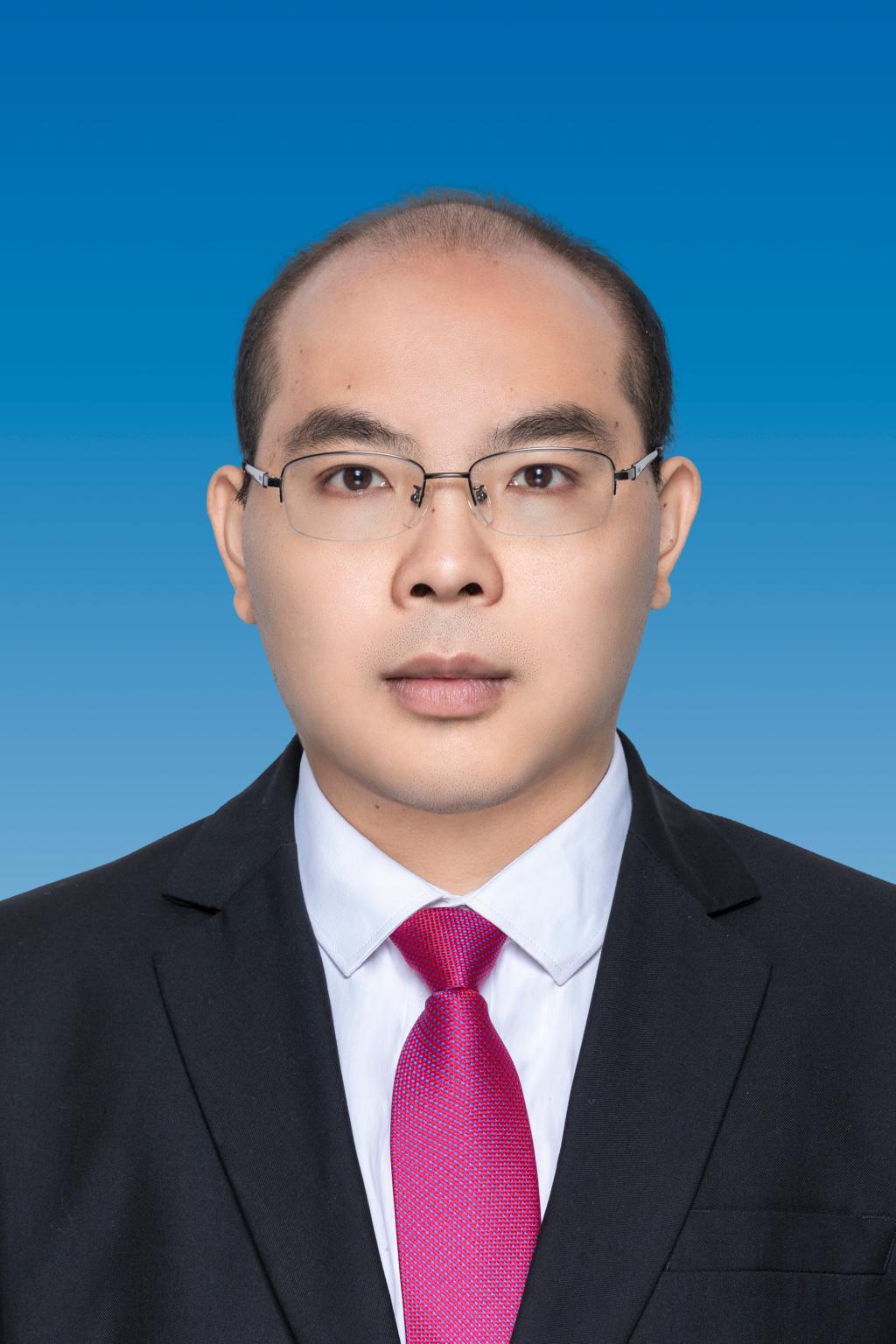}}]{Zhi Yan} received the B. Eng. degree in mechanical mechanical engineering and automation and the Ph.D. degree in communication and information system from Beijing University of Posts and Telecommunications (BUPT), Beijing, China, in 2007 and 2012, respectively. From August 2012 to March 2014, he was a researcher with the Network Technology Research Center, China Unicom Research Institute. He is currently an Associate Professor with the School of Electrical and Information Engineering, Hunan University, Changsha, China. His current research interests are cooperative communication, cellular network analysis and modeling.
	\end{IEEEbiography}
	
	\vspace{-12mm}
	
	\begin{IEEEbiography}[{\includegraphics[width=1in,height=1.2in,clip,keepaspectratio]{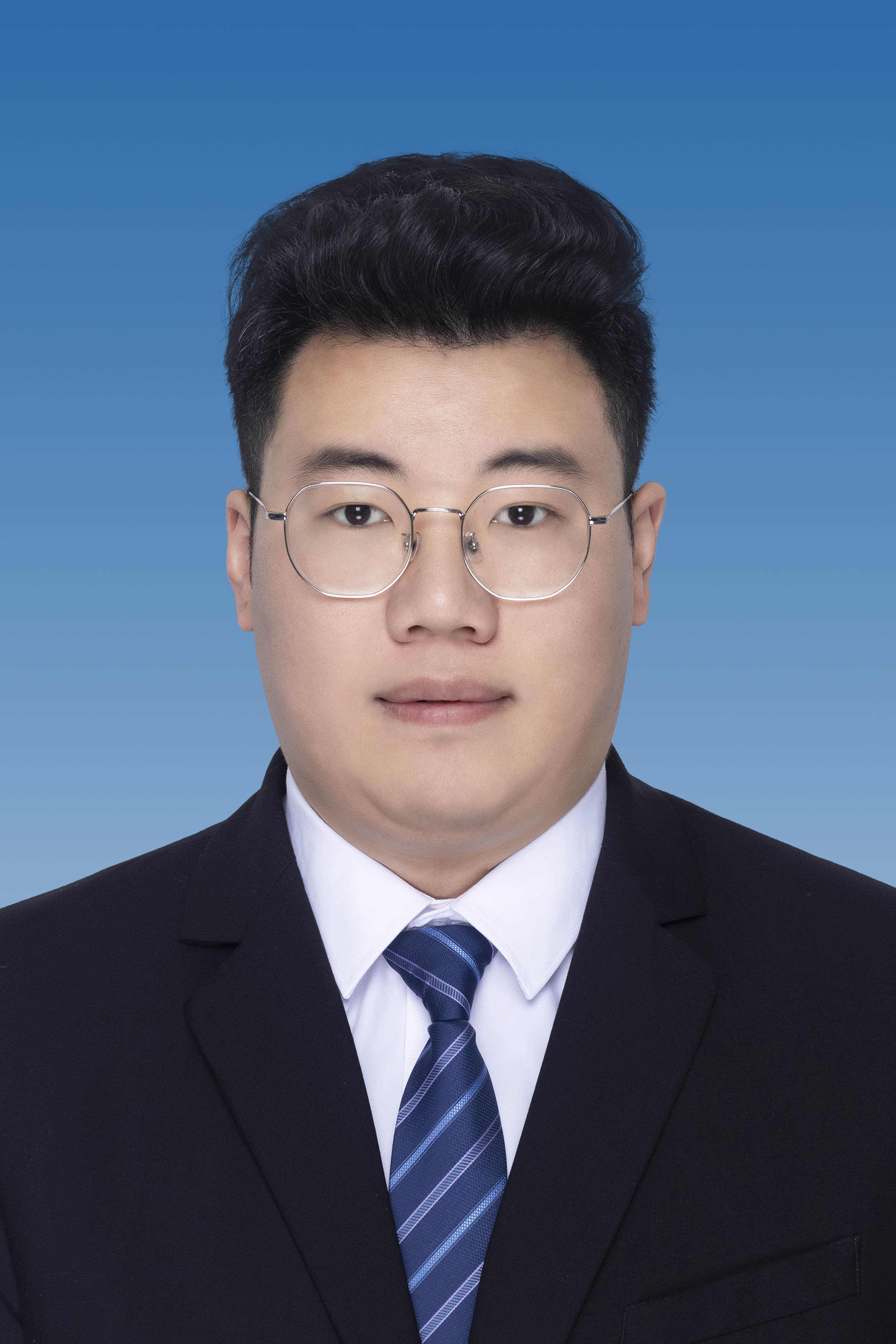}}]{Jiawen He} received the B.Eng. degree in mechanical design, manufacturing, and automation from Nanjing Agricultural University China, in 2019 and the M.S. degree in mechanical engineering from Central South University, Changsha, China, in 2022. He is currently working toward the Ph.D. degree in control science and engineering with Hunan University, Changsha, China. His current research interests include robotic control and industrial manufacturing.
	\end{IEEEbiography}
	
	\vspace{-12mm}
	
	\begin{IEEEbiography}[{\includegraphics[width=1in,height=1.2in,clip,keepaspectratio]{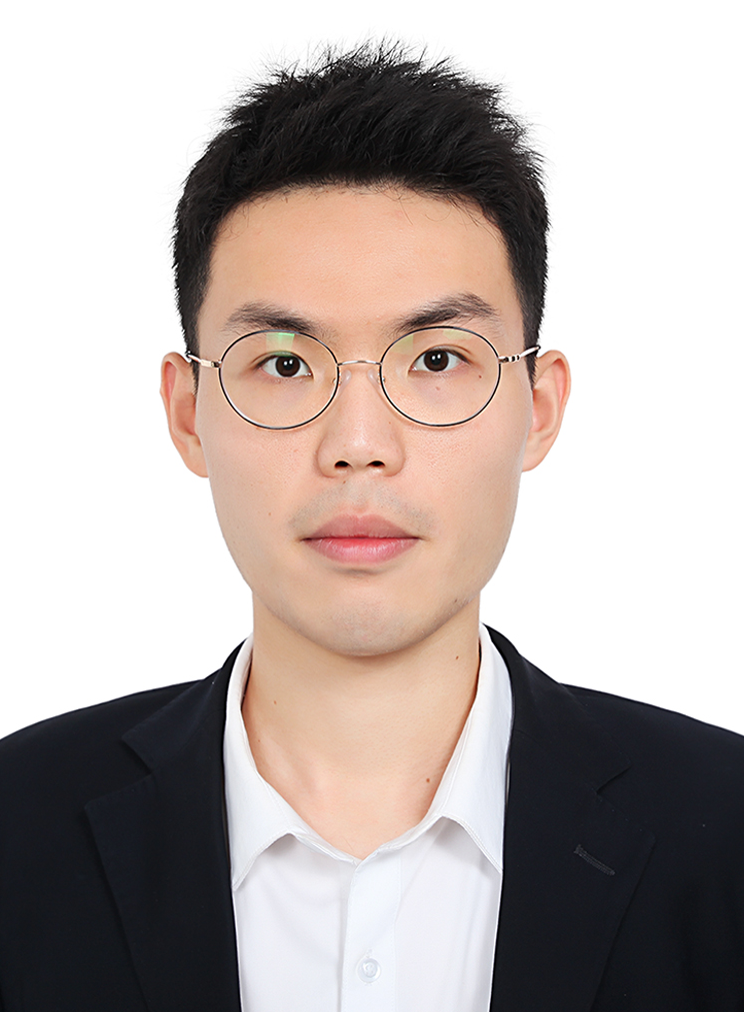}}]{ Zheng Fang} received the B.S. degree from Central South university, Changsha, China in 2018, and the Ph.D. degrees in physical electronics from Peking University, Beijing, China in 2023. Since 2023, he has been a researcher in China Mobile Group Hunan Company Limited, Changsha, China. His research interests include robotics and electronics for industrial applications.
	\end{IEEEbiography}

\end{document}